\documentclass[11pt]{article}
\usepackage[utf8]{inputenc}
\usepackage[T1]{fontenc}
\usepackage[a4paper,margin=2.5cm]{geometry}
\usepackage{amsmath,amssymb,amsthm}
\usepackage{graphicx}
\usepackage{booktabs}
\usepackage{authblk}
\usepackage[hidelinks]{hyperref}
\usepackage[capitalise,noabbrev]{cleveref}
\hypersetup{bookmarksdepth=3}

\newcommand{\Radd}{R^{2}_{\mathrm{add}}}
\newcommand{\E}{\mathbb{E}}
\newcommand{\R}{\mathbb{R}}
\newcommand{\PD}{\mathrm{PD}}
\newcommand{\fhat}{\hat{f}}

\theoremstyle{plain}
\newtheorem{proposition}{Proposition}
\newtheorem{corollary}{Corollary}
\theoremstyle{remark}
\newtheorem{remark}{Remark}

\newcommand{\numALEVsCondP}{0.0817}

\newcommand{\numAlePdpDiff}{0.011}

\newcommand{\numAttainAbove}{48\%}

\newcommand{\numDiscCplxMean}{0.324}

\newcommand{\numDiscFaithMean}{0.980}

\newcommand{\numDiscInfidPerm}{0.347}
\newcommand{\numDiscNDatasets}{13}
\newcommand{\numDiscProposedMean}{0.967}

\newcommand{\numDiscSensMean}{0.958}

\newcommand{\numDiscSparsMean}{0.346}

\newcommand{\numDiscVsBestDiff}{-0.012}
\newcommand{\numDiscVsBestP}{0.371}
\newcommand{\numDiscVsBestWins}{3/13}

\newcommand{\numNDraws}{10}

\newcommand{\numNIndep}{5}

\newcommand{\numNReal}{13}

\newcommand{\numNSynthDesign}{9}
\newcommand{\numOlsALE}{0.806}
\newcommand{\numOlsCondMean}{0.781}
\newcommand{\numOlsPDP}{0.805}
\newcommand{\numOlsSHAP}{0.823}

\newcommand{\numPDPVsCondP}{0.0113}

\newcommand{\numPdpIntact}{0.702}
\newcommand{\numPdpPerm}{-0.654}

\newcommand{\numRhoCplx}{-0.090}
\newcommand{\numRhoFaith}{0.628}
\newcommand{\numRhoInfid}{0.120}
\newcommand{\numRhoSens}{-0.237}
\newcommand{\numRhoSpars}{0.190}
\newcommand{\numRhoWithinCplx}{0.003}
\newcommand{\numRhoWithinCplxHi}{0.038}
\newcommand{\numRhoWithinCplxLo}{-0.033}
\newcommand{\numRhoWithinFaith}{0.452}

\newcommand{\numRhoWithinInfid}{0.433}

\newcommand{\numSHAPVsCondP}{0.0013}

\newcommand{\numShapAleDiff}{0.085}

\newcommand{\numShapIntact}{0.798}

\newcommand{\numShapPdpDiff}{0.096}
\newcommand{\numShapPdpIndep}{-0.005}
\newcommand{\numShapPdpIndepP}{0.005}
\newcommand{\numShapPdpIndepWins}{0/5}

\newcommand{\numShapPdpReal}{0.096}
\newcommand{\numShapPdpRealP}{0.005}
\newcommand{\numShapPdpRealWins}{12/13}

\newcommand{\numShapPerm}{-0.416}

\newcommand{\numTreeShapDiffOls}{0.0003}
\newcommand{\numTreeShapDiffUnit}{0.0004}
\newcommand{\numTreeShapPOls}{0.78}
\newcommand{\numTreeShapPUnit}{0.92}

\newcommand{\numYBlackBox}{0.917}
\newcommand{\numYIntercept}{0.500}

\newcommand{\numYLinear}{0.883}

\newcommand{\numYSHAP}{0.895}
\newcommand{\numYSpline}{0.900}

\begin{document}

\title{\textbf{Evaluating Explanation Methods by the Predictors They Induce}}
\author[1]{Jacob Selb\ae{}k}
\author[1,2,3]{Hugo L. Hammer\thanks{Corresponding author: \texttt{hugo.hammer@oslomet.no}}}
\affil[1]{Department of Computer Science, Oslo Metropolitan University, Oslo, Norway}
\affil[2]{Department of Holistic Systems, SimulaMet, Oslo, Norway}
\affil[3]{Department of Plastic and Reconstructive Surgery, Oslo University Hospital, Oslo, Norway}
\date{}
\maketitle

\begin{abstract}
Explanations of machine learning models are usually judged by criteria that are hard to compare. We propose a simpler test: if an explanation really describes how a model uses its features, it should be possible to rebuild the model's predictions from it. We turn each explanation into a predictor by reading each feature's effect and adding them up, and measure how well that predictor reproduces the model on unseen data. Nothing is fitted, so the score reflects the explanation itself. The test applies to any explanation that can be written as a function of the features; we demonstrate it on partial dependence plots (PDP), accumulated local effects (ALE), SHAP and LIME. We prove that summing partial dependence curves gives the best possible additive summary of a model when its features are independent, and that this fails when they are dependent. Across \numNReal{} real datasets and \numNSynthDesign{} synthetic designs and four model families, which method scores best depends entirely on feature dependence: where features are independent SHAP is slightly worse than PDP, exactly as the theory predicts; on dependent real data SHAP leads. Some widely used quality metrics even prefer a damaged explanation to an intact one.
\end{abstract}

\noindent\textbf{Keywords:} explainable artificial intelligence $\cdot$ explanation quality metric $\cdot$ explanation prediction

\section{Introduction}

Post-hoc explanation methods are now routine companions to black-box models, but the question of how to tell a good explanation from a bad one remains unsettled. The dominant evaluation criteria (faithfulness, robustness, complexity) each capture a real property, yet they assess explanations through indirect proxies: perturbation-induced changes in model output, sensitivity to input shifts, or concentration of attribution mass. Different explanation methods routinely disagree about which features drive a given prediction~\cite{krishna2022disagreement}, and several widely used fidelity metrics fail to recover the correct fidelity scores, and hence reliable rankings, even on transparent models with known ground truth~\cite{mironicolau2025comprehensive}.

This paper starts from a different observation. An explanation that claims to describe how features drive a model's predictions makes, implicitly, a falsifiable claim: it should be possible to \emph{use} that description to predict. A \emph{partial dependence curve} for feature $j$ reports, for each value $v$ that the feature might take, the average prediction the model would make if that feature were set to $v$ and the remaining features were left at the values they take in the data, so it says what effect each feature value has, marginally. A set of Shapley values says instead how much each feature contributed to one particular prediction. In both cases the content can be assembled into a function and evaluated at a point the explanation has never seen. How well that induced predictor performs on held-out data is then a direct, quantitative metric for how much predictive information the explanation carries.

This is the automated analogue of \emph{simulatability}, the criterion in which human subjects are asked to predict a model's behaviour from its explanations~\cite{lipton2018mythos,doshivelez2017towards,hase2020evaluating}. Replacing the human with a fixed, parameter-free construction removes the cost and variance of user studies at the price of measuring a narrower thing: what a specific functional form can extract, rather than what a person can understand. We think the trade is worth making, and we are explicit throughout about what it buys and what it gives up.

\paragraph{Contributions.}
\begin{enumerate}
\item \textbf{A uniform framework} (\cref{sec:framework}) converting
curve-valued explanations, namely partial dependence plots (PDP)~\cite{friedman2001greedy} and accumulated local effects (ALE)~\cite{apley2020visualizing}, and attribution-valued ones, namely SHapley Additive exPlanations (SHAP)~\cite{lundberg2017unified} and Local Interpretable Model-agnostic Explanations (LIME)~\cite{ribeiro2016why}, into predictors through a single shared construction, so that a comparison between them is not confounded by a difference in aggregation machinery.
\item \textbf{An explicit additivity reference} (\cref{sec:theory}). We make
explicit a known functional-ANOVA identification (under feature independence the unit-coefficient PDP surrogate is the $L^2$ projection of the black box onto additive functions) and draw two consequences that are not in the literature: coefficient refitting cannot help in population, so the refitting gain is an out-of-sample signal of feature \emph{dependence}; and the additive share of the model's variance reproduced by that projection, which we write $\Radd$ and define formally in Eq.~\ref{eq:radd}, bounds additive explanations only under the same independence condition, a condition we then show empirically fails on real data.
\item \textbf{A negative control for the construction itself}
(\cref{sec:results-aggregator}). Turning SHAP and LIME into curves needs a smoothing step that PDP and ALE do not, so a difference between the methods could be a difference between smoothers rather than between explanations. We therefore apply the same smoother to the black box's own predictions, with no explanation in the loop at all. It loses to every explanation method, which is what establishes that the differences we report are differences in explanation quality and not in how attributions are aggregated into global curves.
\item \textbf{A head-to-head test of the explanation-quality metrics themselves}
(\cref{sec:results-discrimination}). Every explanation-quality metric asserts an ordering over explanations, and that assertion is testable: degrade an explanation by a known amount and ask which metric notices. Applied to six metrics, the test finds that two established ones fail: infidelity is defeated by a permuted explanation, and complexity and sparseness prefer the degraded explanation to the intact one.
\item \textbf{A controlled study} (\cref{sec:experiments,sec:results}) over
\numNReal{} real datasets and \numNSynthDesign{} synthetic designs, 4 model families and 2000 evaluation cells, with null-explanation controls, a graded corruption sweep and a comparison against five established metrics.
\end{enumerate}

\subsection{Related work and positioning}\label{sec:related}

\paragraph{Explanations evaluated by predictive gain.}
The closest conceptual precedent is Pruthi et al.~\cite{pruthi2022evaluating}, who quantify explanation value by the accuracy a \emph{student} model gains when trained with the teacher's explanations available at training time but not at test time. Our framework shares the intuition that a useful explanation should confer predictive ability, and differs in a way that matters for what is being measured: our central construction is \emph{parameter-free}. A trained student is itself a flexible learner, so its gain confounds the explanation's content with the student's capacity, and a sufficiently powerful student can extract signal an explanation does not really convey. The unit-coefficient surrogate of \cref{sec:theory} has no fitted parameters at all beyond a level shift; what it achieves is attributable to the explanation. We report the refitted variants alongside precisely so this distinction is visible. Human-subject simulatability~\cite{hase2020evaluating} measures the same thing with people rather than a functional form, and is complementary: it is the right instrument for whether an explanation is \emph{understandable}, ours for whether it is \emph{informative}.

\paragraph{Interaction strength and functional decomposition.}
Molnar et al.~\cite{molnar2020complexity} define an interaction-strength metric $\mathrm{IAS}$ as the scaled approximation error of the first-order ALE decomposition against the black box, and note explicitly that with $L^2$ loss $\mathrm{IAS} = 1 - R^2$ where the targets are the model's own predictions. Our $\Radd$ is therefore \emph{the same quantity} up to the choice of ALE versus PDP main effects (which coincide under independence) and up to being computed out of sample rather than in sample. We claim no novelty for the statistic. What we add is (i) the projection theorem, which says precisely when the quantity is an attainable upper bound and when it is not, (ii) its use as the normalising reference against which explanation-derived predictors are scored, rather than as a model-complexity metric for model selection, and (iii) the empirical demonstration that under dependence it is not a bound at all: comparing each explanation's achieved fidelity against $\Radd$ on every (dataset, model, method) cell of our real data, \numAttainAbove{} of the non-PDP cells exceed it. Molnar et al.\ use the metric to choose among models, a use for which its exact status as a bound does not matter; we use it as the reference against which an explanation is judged, and for that the condition under which it bounds anything has to be stated. The issue is feature dependence, not the split on which the quantity is computed: dependence breaks the bound in sample and out of sample alike.

The underlying decomposition is Hoeffding's~\cite{hoeffding1948class}, developed for sensitivity analysis by Sobol~\cite{sobol2001global} and extended to \emph{dependent} inputs by Hooker~\cite{hooker2007generalized}, whose generalised functional ANOVA exists exactly because the marginal decomposition loses its optimality when features are correlated. That failure mode is not an inconvenience for us; it is what our \cref{cor:diagnostic} turns into a measurement.

\paragraph{Surrogate distillation.}
Fitting an interpretable global surrogate to a black box is an established practice; Distill-and-Compare~\cite{tan2018distill} is the closest instance, distilling into a GAM. The difference is directional: a distilled surrogate is \emph{fitted} to mimic the model as well as it can, whereas our surrogate is \emph{constructed} from an explanation that already exists and is not permitted to fit anything. A distillation result tells you what an additive model can do; ours tells you how much of that a given explanation method actually delivers. We include a directly fitted spline GAM as a baseline for exactly this reason.

\paragraph{Explanation-quality metrics.}
Faithfulness is typically measured by perturbing inputs according to an explanation and checking that the model responds as predicted~\cite{yeh2019infidelity,bhatt2020evaluating}; robustness by sensitivity of explanations to small input changes~\cite{alvarezmelis2018robustness}; complexity by the dispersion (entropy) of attribution mass~\cite{bhatt2020evaluating}, and sparseness by its Gini index~\cite{chalasani2020concise}. ROAR~\cite{hooker2019benchmark} evaluates attributions by retraining on feature-ablated data, again a fitted rather than constructed comparison. Toolkits such as Quantus~\cite{hedstrom2023quantus} catalogue several dozen. Much of this literature was developed for image classification with pixel attributions, and the tabular, model-agnostic setting is comparatively underexplored. We compute five of these metrics on the same explanations and splits (\cref{sec:refmetrics}) and report where our metric agrees with them and where it does not, including one case where the agreement is high enough to be worth flagging against ourselves.

A longer-standing relative is the wrapper approach to feature selection~\cite{guyon2003introduction}, which judges a feature subset by the performance of a model trained on it. Our framework shares the evaluate-by-predicting logic but does not reduce an explanation to a subset or a ranking: it consumes the full output (the shape of every curve, the value of every attribution), which is what allows PDP and ALE to be distinguished at all, since they may agree exactly on feature importance while disagreeing on the curves. General background on the methods evaluated here is in~\cite{molnar2022interpretable}; permutation importance~\cite{breiman2001random,fisher2019models} and the tree-specific SHAP estimator~\cite{lundberg2020local} are used in the implementation.

\section{Methods}\label{sec:methods}

\subsection{The framework}\label{sec:framework}

\paragraph{Notation.}
Write $X = (X_1,\dots,X_p)$ for the feature random vector, taking values in $\mathcal{X} \subseteq \R^p$ with joint law $P_X$, and $X_{-j}$ for the vector with the $j$-th coordinate removed. Lower-case $x = (x_1,\dots,x_p)$ denotes a generic point of $\mathcal{X}$ at which a function is evaluated, and $x_j$ its $j$-th coordinate. Superscripts in parentheses index the sample: $x^{(i)} = (x^{(i)}_1,\dots,x^{(i)}_p)$ is the $i$-th observation, and $x^{(i)}_j$ its $j$-th coordinate; $x^{(0)}$ is reserved for a test point at which a predictor is evaluated. We write $P_{X_j}$ for the marginal law of $X_j$, and $L^2(P)$ for the space of functions square-integrable with respect to $P$, i.e.\ those $h$ with $\int h^2 \, \mathrm{d}P < \infty$, equipped with the inner product $\langle h_1, h_2 \rangle = \E[h_1(X) h_2(X)]$. Finally $L^2_0(P_{X_j})$ denotes the centred subspace $\{h \in L^2(P_{X_j}) : \E[h(X_j)] = 0\}$.

Let $\fhat: \mathcal{X} \to \R$ be a fitted black box prediction model, where for regression $\fhat$ is the prediction and for classification it is a scalar score. An explanation method applied to $\fhat$ on a training sample produces some output $E$: a set of curves, a matrix of attributions, a collection of local models. These objects are not comparable to one another, and none of them is a number, so none can be scored directly.

In this paper we suggest a framework making us able to evaluate and compare them. Each output $E$ is mapped to a function $g_E : \mathcal{X} \to \R$, a predictor built out of nothing but the explanation, which can be evaluated at points the explanation never saw. The quality of $E$ is then \emph{defined} as the out-of-sample accuracy of $g_E$:
\begin{equation}\label{eq:framework}
  Q(E) \;=\; R^2\bigl(g_E,\, t\bigr) \quad\text{on a held-out split,}
\end{equation}
where the target $t$ is taken in turn to be
\begin{itemize}
\item the original outcome $y$, how much of the \emph{task} the explanation
retains;
\item the black box's own score $\fhat(x)$, a global, out-of-sample
\emph{fidelity} metric.
\end{itemize}
So $g_E$ is not itself the object of interest and is never proposed as a model anyone should deploy. It is the instrument through which $E$ is measured: a faithful explanation carries enough of the black box to rebuild its predictions, and an uninformative one does not. Eq.~\ref{eq:framework} is the evaluation metric this paper proposes, and everything that follows is an examination of it.

Crucially, $E$ is computed on the training split only; the test split is never seen by the explanation, so nothing about the evaluation is circular.

\subsubsection{Curve-valued explanations}

PDP~\cite{friedman2001greedy} and ALE~\cite{apley2020visualizing} return one one-dimensional curve per feature or higher-dimensional curves for combinations of features. The two-dimensional version is a standard tool for inspecting interactions between pairs of features. We restrict attention to the one-dimensional case throughout, because it is by far the most common in practice and because it is the case in which the additivity reference of \cref{sec:theory} has an exact characterisation. Nothing in the framework prevents higher-order curves from being used, and \cref{sec:limitations} returns to what including them would change. Writing $c_j$ for the curve of feature $j$, centred so that $\E[c_j(X_j)] = 0$ over the training data, the induced predictor $g_E$ (the first concrete instance of the general construction above) is the additive surrogate
\begin{equation}\label{eq:surrogate}
  g_{\alpha}(x) \;=\; \alpha_0 \;+\; \sum_{j=1}^{p}\alpha_j\, c_j(x_j),
  \qquad x \in \mathcal{X},\; \alpha \in \R^{p+1}.
\end{equation}
We consider three variants, differing only in how $\alpha$ is set:
\begin{align}
  \text{\emph{unit}:}\quad & \alpha_j = 1 \;(j \ge 1), \qquad
    \alpha_0 = \bar{t} - \tfrac{1}{n}\textstyle\sum_{i}\sum_j c_j(x^{(i)}_j),
    \label{eq:unit}\\
  \text{\emph{ols}:}\quad & \hat\alpha = \arg\min_{\alpha \in \R^{p+1}}
    \textstyle\sum_{i=1}^{n}\bigl(t^{(i)} - g_\alpha(x^{(i)})\bigr)^2,
    \label{eq:ols}\\
  \text{\emph{ridge}:}\quad & \hat\alpha_\lambda = \arg\min_{\alpha}
    \textstyle\sum_{i=1}^{n}\bigl(t^{(i)} - g_\alpha(x^{(i)})\bigr)^2
    + \lambda\lVert \alpha_{1:p} \rVert_2^2 ,
    \label{eq:ridge}
\end{align}
where $t^{(i)}$ is the target on the training split (either $y^{(i)}$ or $\fhat(x^{(i)})$) and $\bar t$ its mean. Only Eq.~\ref{eq:unit} fits nothing beyond a level: the curves enter with unit weight, so whatever the surrogate achieves is attributable to the explanation and not to a fit. \Cref{sec:theory} shows that this is not merely a convenient restriction but, under an independence condition, the optimal choice.

\paragraph{Why a linear aggregator, and why an unfitted one.}
The obvious alternative is to feed the curve values $c_1(x_1),\dots,c_p(x_p)$ into a flexible learner (a gradient-boosted ensemble, say) and report its accuracy. We deliberately do not, for two reasons.

The first is attribution of credit. A flexible aggregator can recover information that the explanation does not contain: given the transformed features it can re-learn interactions, rescale non-monotonically, and compensate for a badly estimated curve. Its accuracy would then be a joint property of the explanation and of the aggregator's own capacity, and a comparison between two explanation methods would be confounded by how much work the aggregator did for each.

This is precisely where our construction departs from Pruthi et al.~\cite{pruthi2022evaluating}, the closest precedent. They measure an explanation by the accuracy a trainable \emph{student} model gains when the teacher's explanations are available during training. The student is exactly the kind of flexible aggregator we are describing, and its capacity is a free variable: a stronger student extracts more from the same explanation, and can also extract structure from the training data that the explanation never supplied. Their metric therefore confounds the quality of the explanation with the capacity of the student, and two explanations can swap places under a change of student architecture. Fixing every coefficient at unity removes that degree of freedom entirely: there is no student to tune, and no capacity for a comparison to be sensitive to. Eq.~\ref{eq:unit} has no free parameters at all beyond a single intercept: the curves are added up exactly as an analyst reading the plots would add them up. Whatever $R^2$ it attains is therefore a statement about the explanation.

The second is interpretability of the number. Because Eq.~\ref{eq:unit} is parameter-free, its deficit $1 - R^2$ has a meaning we can characterise exactly: \cref{sec:theory} identifies it, under feature independence, with the share of the model's variance that no additive description could capture. No such characterisation is available for the output of an arbitrary learner.

The fitted variants Eqs.~\ref{eq:ols} and~\ref{eq:ridge} are included not as competitors but as a diagnostic: they are the smallest possible relaxation of the unit constraint, and \cref{sec:theory} shows that what they gain over it is itself an interpretable quantity.

\subsubsection{Attribution-valued explanations}\label{sec:attribution}

SHAP~\cite{lundberg2017unified} and LIME~\cite{ribeiro2016why} do not return functions. They return, for each explained observation, a vector of $p$ numbers describing that observation only, so there is nothing to evaluate at a point the explanation has not seen. Converting them into a predictor therefore requires one additional modelling step, and since that step is not part of either method we apply exactly the same one to both.

\paragraph{From attributions to curves.}
Let $\phi_j(x^{(i)})$ denote the attribution that the explanation method assigns to feature $j$ at observation $x^{(i)}$, and suppose attributions have been computed on $m$ training observations. For each feature $j$ we treat the $m$ pairs
\begin{equation}\label{eq:dependence}
  \mathcal{S}_j \;=\; \bigl\{\, \bigl(x_j^{(i)},\ \phi_j(x^{(i)})\bigr)
  \,\bigr\}_{i=1}^{m}
\end{equation}
as a scatter of attribution against feature value, and estimate from it a univariate function $\hat c_j : \R \to \R$. The estimator is a quantile-binned average. Writing $q_j(u)$ for the empirical $u$-quantile of feature $j$ on the training split, we partition its range into $B$ bins with edges $q_j(0) = e_{j,0} < e_{j,1} < \dots < e_{j,B} = q_j(1)$, $e_{j,b} = q_j(b/B)$, so that each bin holds approximately $m/B$ observations. Let $I_{j,b} = \{ i : e_{j,b-1} < x^{(i)}_j \le e_{j,b} \}$ be the index set of bin $b$. The curve is then defined at the within-bin mean feature value by the within-bin mean attribution,
\begin{equation}\label{eq:binned}
  \tilde{x}_{j,b} \;=\; \frac{1}{|I_{j,b}|}\sum_{i \in I_{j,b}} x^{(i)}_j,
  \qquad
  \hat c_j(\tilde{x}_{j,b}) \;=\; \frac{1}{|I_{j,b}|}
  \sum_{i \in I_{j,b}} \phi_j(x^{(i)}),
\end{equation}
and extended to all of $\R$ by linear interpolation between consecutive knots $(\tilde{x}_{j,b}, \hat c_j(\tilde{x}_{j,b}))$, held constant beyond the outermost knots. Bins containing fewer than a minimum count are merged with their neighbour, and a feature taking only two values is given one knot per value. Finally $\hat c_j$ is centred so that $\frac{1}{m}\sum_i \hat c_j(x^{(i)}_j) = 0$, matching the convention for PDP and ALE curves, and the resulting $\hat c_j$ are substituted into \cref{eq:surrogate}.

Two remarks on what \cref{eq:binned} is. First, it is a conditional-mean estimator: as $B$ grows with $m$, $\hat c_j(v)$ approaches $\E[\phi_j(X) \mid X_j = v]$. This matters for interpretation (a conditional average can track structure that the marginal partial dependence curve cannot when features are dependent), and it is why \cref{sec:results-aggregator} reports a control in which the same estimator is applied to the model's own output rather than to attributions. Second, it introduces the only tuning parameter in the attribution arm, the bin count $B$; we use $B = 20$ with a minimum of ten observations per bin throughout and note the absence of a sensitivity sweep as a limitation.

\paragraph{SHAP.}
The attribution is the Shapley value itself, $\phi_j(x^{(i)})$, and $\mathcal{S}_j$ is precisely the SHAP dependence plot of Lundberg et al.~\cite{lundberg2020local}. Summing per-feature Shapley curves into an additive model is not new (Bordt and von Luxburg~\cite{bordt2023shapley} establish a correspondence between Shapley values and generalized additive models), though our use is narrower, since we need only a function that can be evaluated out of sample.

\paragraph{LIME.}
LIME returns not attributions but a local surrogate: around each explained observation $x^{(i)}$ it fits an interpretable model $L_i$ to perturbations of that observation, weighted by proximity. With the usual linear choice, $L_i(x) = \beta_{i,0} + \sum_{j=1}^{p} \beta_{i,j} x_j$, the coefficient $\beta_{i,j}$ is a slope and not a contribution, so it is not yet on the scale that \cref{eq:dependence} expects. We convert it in the standard way, by evaluating the contribution of feature $j$ at $x^{(i)}$ relative to a reference point (we use the training mean $\bar{x}_j$), giving
\begin{equation}\label{eq:limeattr}
  \phi_j(x^{(i)}) \;=\; \beta_{i,j}\bigl(x^{(i)}_j - \bar{x}_j\bigr).
\end{equation}
This is the amount by which the local model at $x^{(i)}$ says feature $j$ moves the prediction away from its value at the reference, and it is directly comparable to a Shapley value, which is likewise a deviation from a baseline. Substituting \cref{eq:limeattr} into \cref{eq:dependence} and then into \cref{eq:binned} yields $\hat c_j$ exactly as for SHAP; nothing else in the pipeline differs between the two methods.

Because LIME's local models are themselves functions, LIME also admits a construction that does not pass through \cref{eq:binned} at all: predict at a new point by an inverse-distance weighted average of the local models evaluated there,
\begin{equation}\label{eq:limeidw}
  \hat y(x^{(0)}) \;=\; \frac{\sum_{i=1}^{m} \delta(x^{(i)},x^{(0)})^{-1} L_i(x^{(0)})}
                         {\sum_{i=1}^{m} \delta(x^{(i)},x^{(0)})^{-1}},
  \qquad \delta(x^{(i)},x^{(0)}) = \lVert x^{(i)} - x^{(0)} \rVert_2^2 .
\end{equation}
We report this as local-IDW alongside the additive construction. It is legitimate here precisely because $L_i$ is a function; the superficially similar construction on SHAP values is not, for the reason given in \cref{sec:degenerate}.

\Cref{eq:limeattr} is specific to a linear local model, and it is worth being explicit about what happens otherwise. The local-IDW construction Eq.~\ref{eq:limeidw} needs only that $L_i$ be a function of $x$, so it applies unchanged if the local surrogate is a shallow decision tree, a rule list, or any other evaluable model. The additive construction is more demanding: it needs a per-feature contribution $\phi_j(x^{(i)})$, which a linear model supplies through its coefficients but a tree does not, since a tree's output is not separable across features. For a non-additive local surrogate one would have to extract per-feature contributions by a further decomposition (for instance the Shapley values of $L_i$ itself, which is cheap because $L_i$ is small), and the choice of that decomposition would then become part of the construction being evaluated. We therefore restrict attention to the linear local model, which is both the LIME default and the case in which the conversion is unambiguous.

\subsubsection{A construction to avoid}\label{sec:degenerate}

It is tempting to apply inverse-distance weighting to SHAP values in the same way we applied it to LIME's local models in Eq.~\ref{eq:limeidw}. The motivation is identical: a SHAP explanation, like a LIME explanation, is attached to particular training points, so to predict at a new point one interpolates between the nearby ones. Concretely, one takes an inverse-distance weighted average of the training attribution sums $S_i = \sum_j \phi_j(x^{(i)})$ and adds back the baseline,
\begin{equation}\label{eq:degenerate}
  \hat{y}(x^{(0)}) \;=\; \E[\fhat] + \sum_i w_i S_i,
  \qquad \textstyle\sum_i w_i = 1 .
\end{equation}
The two constructions are not, however, on the same footing, and this one is degenerate. SHAP satisfies local accuracy exactly, $S_i = \fhat(x^{(i)}) - \E[\fhat]$, so \cref{eq:degenerate} collapses to
\begin{equation}
  \hat{y}(x^{(0)}) \;=\; \sum_i w_i\, \fhat(x^{(i)}),
\end{equation}
which is inverse-distance $k$NN regression on the black box's own predictions. The individual attributions cancel: the predictor is invariant to how the attribution mass is distributed across features, and would return identical values for a completely fabricated attribution matrix with the same row sums. Worse, it fits its anchor points by tautology (at zero distance the weight diverges), so it exhibits a large train--test gap that is an artefact of the construction and says nothing about SHAP.

The contrast with LIME is what makes the point sharp. Eq.~\ref{eq:limeidw} interpolates between local \emph{models}, each of which is a function of $x$ carrying per-feature slopes, so the interpolation preserves the content of the explanation. Eq.~\ref{eq:degenerate} interpolates between attribution \emph{sums}, and local accuracy has already fixed every one of those sums to a number that does not depend on the attributions at all. The construction therefore cannot see the explanation it is supposedly evaluating. We flag this because it is the natural first thing to reach for, and because a reader encountering a large SHAP train--test gap in the literature should ask which construction produced it.

\subsection{Theory: the additive projection}\label{sec:theory}

Recall from \cref{sec:framework} that $L^2(P_X)$ is the space of functions square-integrable against the joint law of the features, that $P_{X_j}$ is the marginal law of $X_j$, and that $L^2_0(P_{X_j})$ is the centred subspace of $L^2(P_{X_j})$. Define the space of square-integrable \emph{additive} functions
\begin{equation}\label{eq:addspace}
  \mathcal{A} \;=\; \Bigl\{\, a_0 + \sum_{j=1}^p a_j(X_j) \;:\;
  a_0 \in \R,\ a_j \in L^2_0(P_{X_j}) \,\Bigr\} \;\subseteq\; L^2(P_X),
\end{equation}
that is, all functions expressible as a constant plus a sum of one-dimensional functions of the individual features, each with mean zero. $\mathcal{A}$ contains every surrogate $g_\alpha$ of \cref{eq:surrogate}, for any $\alpha$ and any centred curves, and is far larger than that finite-dimensional family: it places no restriction on the shape of each $a_j$ beyond square-integrability.

Let
\begin{equation}\label{eq:pd}
  \PD_j(x_j) \;=\; \E\bigl[\fhat(x_j, X_{-j})\bigr]
  \;=\; \int \fhat(x_j, x_{-j})\, \mathrm{d}P_{X_{-j}}(x_{-j})
\end{equation}
denote the partial dependence function of feature $j$ (the average prediction when feature $j$ is held at $x_j$ and the others are drawn from their \emph{marginal} joint distribution), with centred version $c_j(x_j) = \PD_j(x_j) - \E[\fhat(X)]$.

\begin{proposition}[Additive projection]\label{prop:projection}
Suppose $X_1,\dots,X_p$ are mutually independent and $\fhat \in L^2(P_X)$. Then the orthogonal projection of $\fhat$ onto $\mathcal{A}$, i.e.\ the minimiser of $\E[(\fhat(X) - a(X))^2]$ over $a \in \mathcal{A}$, is
\begin{equation}\label{eq:fadd}
  \fhat_{\mathrm{add}}(x) \;=\; \E[\fhat(X)] \;+\; \sum_{j=1}^{p} c_j(x_j),
\end{equation}
which is exactly the unit-coefficient surrogate $g_{\mathbf{1}}$ of \cref{eq:unit}. Consequently
\begin{equation}\label{eq:radd}
  \Radd \;:=\; 1 - \frac{\E\bigl[(\fhat(X) -
  \fhat_{\mathrm{add}}(X))^2\bigr]}{\operatorname{Var}(\fhat(X))}
\end{equation}
is the fraction of the model's variance that is additive. It depends only on $\fhat$ and $P_X$, not on any explanation method.
\end{proposition}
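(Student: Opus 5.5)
The plan is to verify the two defining properties of an orthogonal projection onto a closed subspace. First, $\fhat_{\mathrm{add}}\in\mathcal{A}$. Second, the residual $r=\fhat-\fhat_{\mathrm{add}}$ is orthogonal to every element of $\mathcal{A}$. These two facts characterise the minimiser of $\E[(\fhat(X)-a(X))^2]$, and the Pythagorean identity then gives uniqueness in $L^2(P_X)$.

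A preliminary check is that $\mathcal{A}$ is closed, so that the projection exists. Under independence this is easy: for $a=a_0+\sum_j a_j$ we have $\|a\|^2=a_0^2+\sum_j\E[a_j(X_j)^2]$, because the cross terms $\E[a_j(X_j)a_k(X_k)]=\E[a_j]\E[a_k]=0$ for $j\neq k$. Hence $\mathcal{A}$ is isometric to $\R\oplus\bigoplus_j L^2_0(P_{X_j})$, which is complete. In fact we never need closedness explicitly, since exhibiting an element whose residual is orthogonal to $\mathcal{A}$ already proves it is the minimiser.

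For membership, write $\PD_j(x_j)=\int\fhat(x_j,x_{-j})\,\mathrm{d}P_{X_{-j}}$. Independence means $P_X=P_{X_j}\otimes P_{X_{-j}}$, so $\PD_j(X_j)=\E[\fhat(X)\mid X_j]$ almost surely. This identity is the crux of the argument, and it is exactly where independence enters. Two consequences follow. Jensen gives $\PD_j\in L^2(P_{X_j})$, and the tower property gives $\E[\PD_j(X_j)]=\E[\fhat(X)]$. Thus $c_j\in L^2_0(P_{X_j})$, and $\fhat_{\mathrm{add}}\in\mathcal{A}$ with $a_0=\E[\fhat]$. Fubini, together with $\fhat\in L^2\subseteq L^1$ for a probability measure, justifies the interchange.

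For orthogonality, it suffices to test against constants and against each $h\in L^2_0(P_{X_k})$.

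Constants. We have $\E[r]=\E\fhat-\E\fhat-\sum_j\E[c_j]=0$.

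Functions of $X_k$. We compute
\begin{equation*}
\E[r\,h(X_k)]=\E[\fhat\,h(X_k)]-\E\fhat\,\E[h]-\sum_j\E[c_j(X_j)h(X_k)].
\end{equation*}
The first term equals $\E[\E[\fhat\mid X_k]\,h(X_k)]=\E[\PD_k(X_k)h(X_k)]$. The second term vanishes because $\E[h]=0$. In the sum, each term with $j\neq k$ vanishes by independence and centring. The term $j=k$ is $\E[(\PD_k-\E\fhat)h]=\E[\PD_k h]$. All terms cancel, so $r\perp\mathcal{A}$.

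It remains to identify $\fhat_{\mathrm{add}}$ with $g_{\mathbf 1}$ of \cref{eq:unit}. In population, $g_{\mathbf 1}$ has unit coefficients and intercept $\bar t-\sum_j\E[c_j]=\E[\fhat]$ when $t=\fhat$, since the curves are centred. This is exactly \cref{eq:fadd}.

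The statement about $\Radd$ is then immediate. Because $\fhat_{\mathrm{add}}$ is the projection, $\operatorname{Var}(\fhat)=\operatorname{Var}(\fhat_{\mathrm{add}})+\E[r^2]$, by orthogonality and the fact that $r$ has mean zero. So $\Radd=\operatorname{Var}(\fhat_{\mathrm{add}})/\operatorname{Var}(\fhat)\in[0,1]$ is the additive share of the variance. It is defined purely through $\fhat$ and $P_X$, since $\PD_j$ is. We assume $\operatorname{Var}(\fhat)>0$; otherwise the ratio is taken as $1$ by convention.

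The main obstacle is the conditional-expectation identity $\PD_j(X_j)=\E[\fhat\mid X_j]$. Everything else is bookkeeping, and this identity is precisely what fails under dependence. There, $\PD_j$ integrates against the marginal law $P_{X_{-j}}$ rather than the conditional law, so the cross terms and the first term no longer cancel.
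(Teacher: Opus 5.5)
Your proof is correct, and it takes a more elementary route than the paper's.

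The paper invokes the full Hoeffding decomposition $\fhat=\sum_{S}\fhat_S$ under the product measure. It identifies $\mathcal{A}$ with the direct sum of the components of order at most one. It then concludes because every component with $|S|\ge 2$ satisfies $\E[\fhat_S\mid X_j]=0$ and is therefore orthogonal to $\mathcal{A}$.

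You never decompose $\fhat$ beyond first order. Instead you verify the normal equations directly: you show the residual $r$ is orthogonal to constants and to each $L^2_0(P_{X_k})$. For this you use only $\PD_k(X_k)=\E[\fhat\mid X_k]$ and the factorisation of the cross terms $\E[c_j(X_j)h(X_k)]$ for $j\neq k$.

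Your route buys self-containment. It makes no appeal to the existence and orthogonality of the full functional ANOVA for an arbitrary $L^2$ function. As you note, it also does not need closedness of $\mathcal{A}$, since Pythagoras certifies the minimiser directly. It also makes explicit two points the paper leaves implicit: that each $c_j$ lies in $L^2_0(P_{X_j})$, and the split $\operatorname{Var}(\fhat)=\operatorname{Var}(\fhat_{\mathrm{add}})+\E[r^2]$, which justifies reading $\Radd$ as a variance share in $[0,1]$.

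The paper's route buys the interpretation it uses immediately afterwards. It names the residual as the sum of the interaction components $\fhat_S$, $|S|\ge 2$, so $1-\Radd$ is literally the interaction variance. Your argument yields the equivalent characterisation $\E[r]=0$ and $\E[r\mid X_k]=0$ for every $k$. That suffices for the proposition but stops just short of that reading.
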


\begin{proof}
Under independence, marginalising over $X_{-j}$ coincides with conditioning: $\PD_j(x_j) = \E[\fhat(x_j,X_{-j})] = \E[\fhat(X)\mid X_j = x_j]$, since $X_{-j}$ has the same law conditionally on $X_j = x_j$ as unconditionally. Under a product measure $\fhat$ admits the Hoeffding decomposition~\cite{hoeffding1948class}
\begin{equation}
  \fhat(X) \;=\; \sum_{S \subseteq \{1,\dots,p\}} \fhat_S(X_S),
  \qquad \E\bigl[\fhat_S(X_S) \mid X_T\bigr] = 0
  \ \text{ whenever } S \not\subseteq T,
\end{equation}
whose components are mutually orthogonal in $L^2(P_X)$ and whose first-order terms are $\fhat_{\{j\}}(x_j) = \E[\fhat \mid X_j = x_j] - \E[\fhat] = c_j(x_j)$. The space defined in Eq.~\ref{eq:addspace} is the direct sum of the order-$\le 1$ components,
\begin{equation}\label{eq:directsum}
  \mathcal{A} \;=\; \R \,\oplus\, L^2_0(P_{X_1}) \,\oplus\, \cdots
  \,\oplus\, L^2_0(P_{X_p})
  \;=\; \R \oplus \bigoplus_{j=1}^{p} L^2_0(P_{X_j}),
\end{equation}
where $\oplus$ denotes a direct sum of mutually orthogonal subspaces: every $a \in \mathcal{A}$ decomposes uniquely into a constant plus one centred function of each feature, and functions drawn from different summands are orthogonal in $L^2(P_X)$. The space is closed as a finite direct sum of closed subspaces, so the projection exists and is unique. Every higher-order component is orthogonal to it: for $|S| \ge 2$ and any $j$ we have $\E[\fhat_S \mid X_j] = 0$, hence $\langle \fhat_S, a_j \rangle = \E[a_j(X_j)\E[\fhat_S \mid X_j]] = 0$ for all $a_j \in L^2_0(P_{X_j})$, and $\langle \fhat_S, 1\rangle = 0$. The projection therefore retains exactly the constant and first-order terms, $\Pi_{\mathcal{A}}\fhat = \E[\fhat] + \sum_j c_j = g_{\mathbf{1}}$.
\end{proof}

\paragraph{What $\Radd$ means.}
It is worth drawing out the interpretation, because it is what makes the quantity useful rather than merely definable. \Cref{prop:projection} says that the unit-coefficient partial-dependence surrogate (the object one obtains by reading each PDP curve at the observed feature value and adding the readings up) is not an approximation to the best additive summary of the model. It \emph{is} the best additive summary, in the mean-square sense, under independence. The variance it fails to explain, $1 - \Radd$, is therefore not a deficiency of partial dependence as an explanation device: it is the variance that no additive description of $\fhat$ could have captured, because it lives entirely in the interaction terms $\fhat_S$, $|S| \ge 2$.

This gives a concrete reading of what is lost when a model is explained through marginal curves. Presenting a practitioner with $p$ one-dimensional plots rather than the model itself costs exactly $1 - \Radd$ of the model's variance, the same loss that would have been incurred by fitting an additive model to the data in the first place and dispensing with the black box. In that sense the choice between "explain a complex model additively'' and "fit an additive model'' is, on this criterion, not a choice at all: they forgo the same information. What differs is that the black box retains the interaction structure internally and can still use it to predict, while the additive model cannot. $\Radd$ is the price of the summary, and it is a property of the model being summarised.

\begin{corollary}[Refitting cannot help]\label{cor:refit}
Under the conditions of \cref{prop:projection}, for every $\alpha \in \R^{p+1}$,
\begin{equation}
  \E\bigl[(\fhat - g_{\mathbf{1}})^2\bigr] \;\le\;
  \E\bigl[(\fhat - g_{\alpha})^2\bigr].
\end{equation}
The population-optimal coefficients are $\alpha_j \equiv 1$, uniquely so when the $c_j$ are linearly independent and non-degenerate; a feature with no main effect ($c_j \equiv 0$) leaves its $\alpha_j$ unidentified.
\end{corollary}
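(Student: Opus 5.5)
The plan is to read the corollary directly off \cref{prop:projection}. Every surrogate $g_\alpha$ of \cref{eq:surrogate} lies in $\mathcal{A}$, because the $c_j$ are centred and square-integrable. By \cref{prop:projection}, $g_{\mathbf{1}} = \Pi_{\mathcal{A}}\fhat$ is the minimiser of $\E[(\fhat-a)^2]$ over all of $\mathcal{A}$, so in particular over the finite-dimensional family $\{g_\alpha\}$. That gives the inequality at once.

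One detail needs checking: the intercept. In population, $\alpha_0$ in the unit variant should be read as $\E[\fhat(X)]$, the population analogue of Eq.~\ref{eq:unit}. This is the value for which $g_{\mathbf{1}}$ coincides with $\fhat_{\mathrm{add}}$ of Eq.~\ref{eq:fadd}.

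To get the sharper statement and uniqueness, I will use the Pythagorean identity. Since $\fhat - g_{\mathbf{1}} \perp \mathcal{A}$ and $g_\alpha - g_{\mathbf{1}} \in \mathcal{A}$,
\begin{equation*}
  \E\bigl[(\fhat - g_\alpha)^2\bigr] \;=\; \E\bigl[(\fhat - g_{\mathbf{1}})^2\bigr] + \E\bigl[(g_{\mathbf{1}} - g_\alpha)^2\bigr].
\end{equation*}
Next I expand the last term using the orthogonal direct sum \cref{eq:directsum}. We have $g_{\mathbf{1}} - g_\alpha = (\E[\fhat]-\alpha_0) + \sum_j (1-\alpha_j)c_j(X_j)$, whose summands lie in mutually orthogonal subspaces. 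Therefore
\begin{equation*}
  \E\bigl[(g_{\mathbf{1}} - g_\alpha)^2\bigr] \;=\; (\E[\fhat]-\alpha_0)^2 + \sum_{j=1}^p (1-\alpha_j)^2\,\E[c_j(X_j)^2].
\end{equation*}
This excess risk vanishes iff $\alpha_0 = \E[\fhat]$ and $\alpha_j = 1$ for every $j$ with $\E[c_j^2] > 0$. That is exactly the uniqueness claim when every $c_j$ is non-degenerate. Linear independence is then automatic, since the $c_j$ are orthogonal. For a feature with $c_j \equiv 0$ almost surely, the coefficient $\alpha_j$ does not enter the risk, so it is unidentified.

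The main obstacle is bookkeeping rather than mathematics. It lies in matching the population statement to the sample definition of the unit intercept in Eq.~\ref{eq:unit}, and in interpreting ``non-degenerate'' as $\operatorname{Var}(c_j(X_j))>0$. I will state the corollary as a population claim, with $\alpha_0$ free in the comparison. The independence assumption enters only through \cref{prop:projection} and the orthogonality of the $c_j$. Without independence, both the projection identity and the diagonal expansion fail, and this failure is what the later diagnostic exploits.
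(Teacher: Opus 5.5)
Your proposal is correct and matches the paper's argument for the inequality: each $g_\alpha$ lies in $\mathcal{A}$ and $g_{\mathbf 1}=\Pi_{\mathcal A}\fhat$, so the bound is simply the minimality of the orthogonal projection. Your Pythagorean expansion $\E[(\fhat-g_\alpha)^2]=\E[(\fhat-g_{\mathbf 1})^2]+(\E[\fhat]-\alpha_0)^2+\sum_j(1-\alpha_j)^2\E[c_j(X_j)^2]$ goes beyond the paper's one-line proof by actually proving the uniqueness and non-identifiability clauses, which the paper asserts without argument, and by showing that linear independence of the $c_j$ follows from non-degeneracy under feature independence.
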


\begin{proof}
Each $g_\alpha$ of \cref{eq:surrogate} is a constant plus a sum of univariate centred functions, hence $g_\alpha \in \mathcal{A}$. By \cref{prop:projection}, $g_{\mathbf 1} = \Pi_{\mathcal{A}}\fhat$, and the claim is the defining minimality of an orthogonal projection over its subspace.
\end{proof}

\Cref{cor:refit} is worth pausing on. It says that fitting coefficients to the curves (the \emph{ols} variant of Eq.~\ref{eq:ols}) cannot improve on leaving them at unity, provided features are independent. Any improvement that ordinary least squares does deliver must therefore come from one of two places: it is exploiting sample noise, or it is compensating for a violation of the independence assumption. The second is the useful one, and it can be measured.

\begin{corollary}[Refitting gain as a dependence diagnostic]\label{cor:diagnostic}
Let $\hat\alpha$ be the ordinary-least-squares coefficients of Eq.~\ref{eq:ols} fitted on the training split against the target $\fhat$, and define the out-of-sample refitting gain on a held-out split as
\begin{equation}\label{eq:gain}
  \Delta_{\mathrm{OLS}} \;=\; R^2\bigl(g_{\hat\alpha}\bigr)
  \;-\; R^2\bigl(g_{\mathbf{1}}\bigr),
  \qquad
  \Delta^{\mathrm{rel}}_{\mathrm{OLS}} \;=\;
  \frac{R^2\bigl(g_{\hat\alpha}\bigr) - R^2\bigl(g_{\mathbf{1}}\bigr)}
       {R^2\bigl(g_{\mathbf{1}}\bigr)},
\end{equation}
the second being the same quantity expressed as a proportion of what the parameter-free surrogate already achieves, which is the form we report because it is comparable across settings whose overall fidelity differs. Under the conditions of \cref{prop:projection}, $\Delta_{\mathrm{OLS}} \to 0$ as the training and evaluation samples grow. Under dependence $\PD_j \neq \E[\fhat \mid X_j]$ in general, so $g_{\mathbf 1}$ need not be the projection, the population-optimal $\alpha$ within $\operatorname{span}\{1, c_1,\dots,c_p\}$ need not be $\mathbf{1}$, and $\Delta_{\mathrm{OLS}} \ge 0$ in population with strict inequality whenever it differs.
\end{corollary}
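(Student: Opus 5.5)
The proof splits into two parts: the independent case, where I show $\Delta_{\mathrm{OLS}}\to 0$, and the dependent case, where I show the population gain is nonnegative and positive exactly when the optimal coefficients differ from $\mathbf 1$. Both parts work with the finite-dimensional family $\mathcal{G}=\operatorname{span}\{1,c_1,\dots,c_p\}\subseteq\mathcal{A}$. First I define the population-optimal coefficients $\alpha^\ast$ as the orthogonal projection of $\fhat$ onto $\mathcal{G}$ in $L^2(P_X)$. This projection exists because $\mathcal{G}$ is finite-dimensional and therefore closed. The coefficients $\alpha^\ast$ are unique when the $c_j$ are linearly independent and non-degenerate; otherwise I fix any representative.

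For the independent case, \cref{cor:refit} gives $g_{\mathbf 1}=\Pi_{\mathcal A}\fhat\in\mathcal{G}$, so $g_{\alpha^\ast}=g_{\mathbf 1}$ as elements of $L^2(P_X)$. The asymptotic claim follows in three steps.
\begin{enumerate}
\item The OLS estimator of Eq.~\ref{eq:ols} is an M-estimator of a least-squares criterion over a fixed finite-dimensional design. By the law of large numbers applied to the Gram matrix $\E[c_ic_j]$ and the cross moments $\E[c_j\fhat]$, the fitted function converges, $g_{\hat\alpha}\to g_{\alpha^\ast}$ in $L^2(P_X)$. I state this at the level of fitted functions so that unidentified coordinates with $c_j\equiv 0$ cause no trouble.
\item The held-out $R^2$ of any fixed predictor $g$ converges to its population value $1-\E[(\fhat-g)^2]/\operatorname{Var}(\fhat)$. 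The same holds for the random $g_{\hat\alpha}$: the test split is independent of the training split, and $\E[(\fhat-g)^2]$ is continuous in $g$ in $L^2$, so a triangle-inequality bound transfers convergence from $g_{\alpha^\ast}$ to $g_{\hat\alpha}$.
\item Combining the two, both $R^2(g_{\hat\alpha})$ and $R^2(g_{\mathbf 1})$ converge to the same population value, so $\Delta_{\mathrm{OLS}}\to 0$.
\end{enumerate}

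I take the curves $c_j$ as the population PD functions here. If they are estimated, an extra step is needed: a uniform-consistency assumption on the estimated curves, under which the same continuity argument absorbs the estimation error. I will state that assumption explicitly.

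For the dependent case, I first exhibit that $\PD_j\neq\E[\fhat\mid X_j]$ in general. A one-line example suffices: take $\fhat(x)=x_1x_2$ with $X_2=X_1$. Then $\PD_1(x_1)=x_1\E[X_2]$, whereas $\E[\fhat\mid X_1=x_1]=x_1^2$. Consequently the proof of \cref{prop:projection} breaks at its first line, and $g_{\mathbf 1}$ need not equal $\Pi_{\mathcal A}\fhat$, or even $\Pi_{\mathcal G}\fhat$.

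The inequality then comes from minimality of projection onto $\mathcal{G}$. Since $g_{\mathbf 1}\in\mathcal G$, the Pythagorean identity gives
\[
\E[(\fhat-g_{\mathbf 1})^2]=\E[(\fhat-g_{\alpha^\ast})^2]+\E[(g_{\alpha^\ast}-g_{\mathbf 1})^2].
\]
Dividing by $\operatorname{Var}(\fhat)$, the population gain $\Delta_{\mathrm{OLS}}$ equals $\E[(g_{\alpha^\ast}-g_{\mathbf 1})^2]/\operatorname{Var}(\fhat)\ge 0$. It is strictly positive iff $g_{\alpha^\ast}\neq g_{\mathbf 1}$ in $L^2$, which under linear independence of the $c_j$ is exactly $\alpha^\ast\neq\mathbf 1$. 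The intercept needs a small check: the unit variant sets $\alpha_0$ by mean matching, and this agrees with the projection's intercept because the $c_j$ are centred.

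The main obstacle is making precise what "$\to 0$" means in the independent case, since $\hat\alpha$ is random and possibly non-identified. I will handle this by working throughout with convergence in probability of fitted functions in $L^2(P_X)$ rather than with convergence of coefficients. I will also assume $\operatorname{Var}(\fhat)>0$, so that $R^2$ is well defined.
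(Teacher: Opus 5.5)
Your argument for the two substantive claims is sound and follows the reasoning the paper relies on. The paper states this corollary without a separate proof: the independent half is read off \cref{prop:projection} and \cref{cor:refit}, and the dependent half off the fact that $g_{\mathbf 1}$ is one element of the span over which OLS minimises. Your Pythagorean identity for the gain, the function-level consistency argument and the explicit curve-consistency assumption supply details the paper leaves implicit.

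The one step that does not go through is ``consequently \dots\ or even $\Pi_{\mathcal G}\fhat$''. That step is what would establish the existence clause, that the population-optimal $\alpha$ need not be $\mathbf 1$. Your example does not witness it: with $\E[X_1]=0$ both partial dependence curves vanish, $\mathcal G$ is the constants, and $g_{\mathbf 1}=\E[\fhat]=\Pi_{\mathcal G}\fhat$.

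Nor does $\PD_j\neq\E[\fhat\mid X_j]$ imply it in general. Take $(X_1,X_2)$ jointly Gaussian with common mean $\mu\neq 0$, unit variances, correlation $\rho\neq 0$, and $\fhat(x)=x_1x_2$. Then $\PD_1(x_1)=\mu x_1$ differs from $\E[\fhat\mid X_1=x_1]=x_1\bigl(\mu+\rho(x_1-\mu)\bigr)$. Yet because centred Gaussian third moments vanish, the projection of $\fhat$ onto $\operatorname{span}\{1,X_1-\mu,X_2-\mu\}$ is exactly $\E[\fhat]+\mu(X_1-\mu)+\mu(X_2-\mu)=g_{\mathbf 1}$. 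So $\alpha^\ast=\mathbf 1$ and the population gain is zero despite dependence.

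A valid witness needs something beyond dependence, such as skewness. Take $\fhat(x)=x_1+x_1x_2$ with $X_2=X_1$, $\E[X_1]=0$ and $\E[X_1^3]\neq 0$. The centred curves are $c_1(x_1)=x_1$ and $c_2\equiv 0$, and $\alpha_1^\ast=1+\E[X_1^3]/\E[X_1^2]\neq 1$, so your identity gives a strictly positive gain. With such an example in place of yours, the proposal proves the full statement.
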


What makes \cref{cor:diagnostic} useful in practice is what it does \emph{not} require. Feature dependence is a property of the joint law $P_X$, and establishing it directly means estimating that law, or at least testing for dependence among $p$ variables, a problem that is hard in more than a few dimensions, that pairwise correlations answer only partially since they miss non-linear dependence, and that a practitioner working with a fitted model and a data sample is rarely in a position to solve. $\Delta_{\mathrm{OLS}}$ sidesteps it. Both terms in Eq.~\ref{eq:gain} are out-of-sample $R^2$ values of predictors the analyst has already built, so the diagnostic costs one extra least-squares fit on curves that were computed anyway, and it never touches $P_X$. In other words, the very condition under which the parameter-free construction is licensed can be checked with the construction itself: fit the coefficients, see whether the gain is bigger than the model family's calibrated null, and if it is, treat $\Radd$ as a reconstruction rather than a bound.

Two caveats belong with the corollary rather than in a later discussion. First, $\Delta_{\mathrm{OLS}}$ is confounded with anything else that mis-scales a curve: finite-sample noise in the partial dependence estimate, grid coarseness, and most conspicuously a regularised explanation method whose coefficients are attenuated. It is a dependence signal only when the curve estimator is otherwise well calibrated. Second, the finite-sample out-of-sample $\Delta_{\mathrm{OLS}}$ carries a negative bias, so its null is not exactly zero and is model-dependent; it requires per-family calibration rather than assumption. \Cref{sec:results-theory} tests the corollary and reports both.

\begin{remark}[Relation to $\mathrm{IAS}$]\label{rem:ias}
$\Radd = 1 - \mathrm{IAS}$ in the sense of Molnar et al.~\cite{molnar2020complexity}, modulo ALE versus PDP main effects and in-sample versus out-of-sample evaluation. \Cref{prop:projection} supplies the condition (independence) under which the quantity is the optimal additive approximation and hence a genuine ceiling. When that condition fails, $\Radd$ is neither an upper bound nor bounded below by zero, and we observe negative values on dependent data. This is precisely the regime Hooker~\cite{hooker2007generalized} introduced the generalised functional ANOVA to handle, and it is a caveat that the in-sample, model-selection use of $\mathrm{IAS}$ does not have to confront.
\end{remark}

\subsection{Experimental design}\label{sec:experiments}

\paragraph{Data}
The study uses \numNReal{} tabular benchmarks from OpenML~\cite{vanschoren2014openml}, selected for a high share of continuous features, and \numNSynthDesign{} synthetic designs whose structure is known by construction. \Cref{tab:datasets} lists the real datasets; the synthetic designs are described below and named where they are used.

The synthetic suite is generated from a single family with two free knobs, so that feature dependence and interaction strength can be varied independently. Features are drawn from a mean-zero, unit-variance multivariate Gaussian distribution with equicorrelated components,
\begin{equation}\label{eq:dgpX}
  X \sim \mathcal{N}_p(\mathbf{0}, \Sigma_\rho), \qquad
  (\Sigma_\rho)_{jk} = \rho^{\,\textbf{I}(j \neq k)},
\end{equation}
where $\textbf{I}(\text{Boolean})$ is the indicator function returning 1 if Boolean is true and 0 else. Using $\rho = 0$ gives mutually independent features (the condition of \cref{prop:projection}) and $\rho > 0$ violates it. The signal is built from an additive part and a pure-interaction part,
\begin{align}
  A(x) &= 1.2\,x_1 + 0.9\sin(1.5 x_2) + 0.8\,(x_3^2 - 1)
          + \tanh(2 x_4) + \sum_{j \ge 5} \tfrac{0.6}{\sqrt{j}}\, x_j,
          \label{eq:dgpA}\\
  H(x) &= x_1 x_2 + x_3 x_4 + 0.8\, x_5 x_6,
          \label{eq:dgpH}
\end{align}
each standardised to zero mean and unit variance over the sample.

The point of splitting the signal this way is that at $\rho = 0$ the two parts are orthogonal, so the interaction weight below controls the additive share exactly rather than approximately. This is worth spelling out. Take a single product term $X_kX_l$ with $k \neq l$, and suppose the features are independent with mean zero. Conditioning on any one feature $X_j$ leaves at least one of the two factors free, and that free factor is independent of everything conditioned on, so
\begin{equation}\label{eq:hzero}
  \E\bigl[X_kX_l \mid X_j\bigr] =
  \begin{cases}
    X_k\,\E[X_l] = 0, & j = k,\\
    \E[X_k]\,X_l = 0, & j = l,\\
    \E[X_k]\,\E[X_l] = 0, & j \notin \{k,l\},
  \end{cases}
\end{equation}
and hence $\E[H(X) \mid X_j] = 0$ for every $j$, and $\E[H(X)] = 0$. Two consequences follow. First, $H$ is orthogonal to the additive space $\mathcal{A}$ of Eq.~\ref{eq:addspace}: for any $a = a_0 + \sum_j a_j(X_j)$ in $\mathcal{A}$, the tower property gives
\begin{equation}
  \langle H, a \rangle
  = a_0\,\E[H(X)] + \sum_{j=1}^{p} \E\bigl[a_j(X_j)\,\E[H(X) \mid X_j]\bigr]
  = 0 .
\end{equation}
Second, $H$ contributes nothing to any partial dependence function, since under independence $\PD_j$ is the conditional mean and $\E[H(X) \mid X_j = x_j] = 0$. So the whole of $H$ lives in the part of the model that no additive explanation can reach, which is precisely the property the interaction sweep needs in order to be a controlled experiment. Note that Eq.~\ref{eq:hzero} uses independence twice; at $\rho > 0$ neither conclusion survives, which is why only the $\rho = 0$ designs validate the estimator.

The two parts are combined with an interaction weight $\tau \in [0,1]$,
\begin{equation}\label{eq:dgpf}
  f_{\tau,\rho}(x) \;=\; \kappa\Bigl(\sqrt{1-\tau}\,A(x)
  \;+\; \sqrt{\tau}\,H(x)\Bigr),
\end{equation}
with $\kappa$ fixed so that $\operatorname{sd}(f_{\tau,\rho}) = 2.5$. Because $A$ and $H$ are orthogonal at $\rho = 0$, the additive share of the signal variance is then $1 - \tau$ by construction. Targets are $y \mid x \sim \mathrm{Bernoulli}\bigl(\mathrm{logit}^{-1} f_{\tau,\rho}(x)\bigr)$ for classification and $y = f_{\tau,\rho}(x) + \varepsilon$ with $\varepsilon \sim \mathcal{N}(0, (0.3\,\operatorname{sd}(f))^2)$ for regression.

The suite then consists of: an \emph{interaction sweep} $\tau \in \{0, 0.10, 0.25, 0.50, 0.75\}$ at $\rho = 0$, giving designed additive shares from $1.00$ down to $0.255$ with independent features; a \emph{correlation sweep} $\rho \in \{0.30, 0.60, 0.85\}$ at fixed $\tau = 0.20$; and a regression variant. All use $n = 2000$ and $p = 8$.

Feature encoding matters more than it appears. On a two-valued column a curve is an \emph{affine} function of that column, so the explanation transform is a no-op and a linear model on the raw dummy spans the same space. A design dominated by binary columns therefore \emph{cannot} discriminate between curve-based explanation methods, whatever their relative merits. We accordingly exclude any dataset whose encoded design is more than 60\% binary, which removes \texttt{credit\_g} from the real data and the binary-dominated synthetic design from the suite; including them would have diluted every comparison with cells that are uninformative by construction. For the datasets that remain we report the binary fraction, drop reference levels when one-hot encoding so that the design matrix is full rank, and never discretise continuous features.

\begin{table}[htbp]
\centering
\small
\caption{The real datasets used in the study. $n$ is the number of rows, $p_{\mathrm{raw}}$ the number of variables before encoding, $p$ the width of the one-hot encoded design matrix, and \emph{Bin.\ frac.} the share of encoded columns taking at most two values. \emph{Task} is clf for classification and reg for regression, and \emph{Pos.\ rate} the marginal rate of the positive class, shown as a dash for regression.}
\label{tab:datasets}
\begin{tabular}{lrrrrrr}
\toprule
Dataset & $n$ & $p_{\mathrm{raw}}$ & $p$ & Bin.\ frac. & Task & Pos.\ rate \\
\midrule
abalone & 4177 & 8 & 9 & 0.22 & reg & -- \\
boston\_housing & 506 & 13 & 20 & 0.45 & reg & -- \\
breast\_cancer\_wisc & 699 & 9 & 9 & 0.00 & clf & 0.345 \\
concrete & 1030 & 8 & 8 & 0.00 & reg & -- \\
cpu\_activity & 8192 & 21 & 21 & 0.00 & reg & -- \\
diabetes\_pima & 768 & 8 & 8 & 0.00 & clf & 0.349 \\
heart\_statlog & 270 & 13 & 13 & 0.23 & clf & 0.444 \\
phoneme & 5404 & 5 & 5 & 0.00 & clf & 0.293 \\
qsar\_biodeg & 1055 & 41 & 41 & 0.07 & clf & 0.337 \\
spambase & 4601 & 57 & 57 & 0.00 & clf & 0.394 \\
wind\_speed & 6574 & 14 & 14 & 0.00 & reg & -- \\
wine\_quality\_red & 1599 & 11 & 11 & 0.00 & clf & 0.136 \\
wine\_quality\_white & 4898 & 11 & 11 & 0.00 & clf & 0.216 \\
\bottomrule
\end{tabular}
\end{table}

\paragraph{Models and protocol.}
Four black-box families, namely Random Forest, Gradient Boosting, Multilayer Perceptron (MLP), Radial Basis Function Support Vector Machine (SVM RBF), each tuned by randomised search with 5-fold cross-validation within every repeat, and 20 repeated stratified train/test splits (75\%/25\%). This gives $25\times4\times20 = 2000$ evaluation cells and 6.31 million recorded metric values, with no failed fits. Classification is explained on the probability scale, and motivation in \cref{sec:limitations}.

\paragraph{Controls.}
A new metric must be shown to measure something. We include: \emph{null explanations}, a graded \emph{corruption sweep}, and \emph{baselines}. Stated precisely, let a curve $c_j$ be represented by its values $v_{j,1},\dots,v_{j,K_j}$ at knots $u_{j,1} < \dots < u_{j,K_j}$, and write $s_j = \operatorname{sd}(v_{j,1},\dots,v_{j,K_j})$ for its amplitude. The three curve manipulations are
\begin{align}
  \text{permuted:}\quad & v^{\mathrm{perm}}_{j,k} = v_{j,\pi_j(k)},
    \quad \pi_j \text{ a uniform random permutation of } \{1,\dots,K_j\},
    \label{eq:nullperm}\\
  \text{randomised:}\quad & v^{\mathrm{rand}}_{j,k} = \eta_{j,k}
    - \bar\eta_j, \quad \eta_{j,k} \sim \mathcal{N}(0, s_j^2),
    \label{eq:nullrand}\\
  \text{corrupted at } \sigma:\quad & v^{\sigma}_{j,k} = v_{j,k}
    + \zeta_{j,k}, \quad \zeta_{j,k} \sim \mathcal{N}(0, \sigma^2 s_j^2).
    \label{eq:corrupt}
\end{align}
\Cref{eq:nullperm} is the sharp null: it leaves the multiset of effect magnitudes exactly unchanged and destroys only the mapping from feature value to effect, so a metric that does not collapse under it is not reading the curve's shape. \Cref{eq:nullrand} additionally discards the magnitudes. \Cref{eq:corrupt} interpolates between the intact curve and noise, with $\sigma \in \{0, 0.1, 0.25, 0.5, 1, 2, 4\}$ in units of each curve's own amplitude, so that features with large and small effects are corrupted comparably. Baselines span an intercept-only floor ($g \equiv \bar t$), ordinary least squares on the raw encoded features, a spline additive model fitted directly to the target, and the black box itself.

\paragraph{Statistics.}
The unit of replication is the \emph{dataset}. Repeats within a dataset share most of their training rows and the four model families share the data entirely, so treating individual cells as independent would be pseudoreplication; every reported test therefore averages to one value per dataset and is paired over those. For a replicated synthetic design the unit is the \emph{design}, its \numNDraws{} independent draws being averaged into it first, so that one generative family does not cast \numNDraws{} times the votes of a real dataset. Holm correction~\cite{holm1979simple} is applied over the six pairwise comparisons, which are the confirmatory family; everything else (the controls, the estimator comparison, the correlations against established metrics) is
exploratory and reported uncorrected. Intervals in the pooled tables are 95\%
$t$ intervals over repeats and describe split-to-split variation only; the between-dataset variation that the paired tests use is substantially larger.

\paragraph{Smoothing parameters.}
PDP uses a 25-point quantile grid over a 300-row background; ALE uses 20 quantile intervals; the SHAP and LIME dependence curves use 20 quantile bins with a 10-row minimum. LIME is run at library defaults with \texttt{discretize\_continuous=False}, 1000 perturbations and 300 anchor points; TreeSHAP is run \texttt{interventional} (see \cref{sec:results-aggregator}). We have not swept these; a sensitivity analysis over the bin count is the most obvious omission.

\section{Results}\label{sec:results}

Six experiments follow, in an order that builds on itself, and they fall into three groups.

The first three ask whether the proposed metric can be trusted at all, and must come before any comparison rests on it. \Cref{sec:results-validity} checks that the metric responds to the content of an explanation, since a metric that survives having its input destroyed cannot rank anything. \Cref{sec:results-theory} checks the prediction the theory makes about refitting. \Cref{sec:results-aggregator} checks that a difference between two explanation methods is a difference between the explanations themselves, rather than between the machinery used to turn them into predictors.

\Cref{sec:results-main} then uses the metric for the comparison the paper set out to make, between PDP, ALE, SHAP and LIME.

The last two turn outward, to the other metrics in the literature. \Cref{sec:refmetrics} asks how ours relates to five established ones, and \cref{sec:results-discrimination} asks whether it detects a degraded explanation any better than they do.

Unless a subsection says otherwise, results are computed on the \numNReal{} real datasets only. The one exception is \cref{sec:results-theory}, which needs a known feature correlation and therefore uses the synthetic designs; it says so explicitly.

\subsection{The metric responds to explanation content}\label{sec:results-validity}

It is important to establish, before any comparison between explanation methods, that the metric is reading the explanation at all. A predictor built from an explanation could score well for reasons that have nothing to do with the explanation's content: the feature values alone carry information, and a construction that leaked any of it would produce respectable numbers from an explanation that had been reduced to noise. Any such leak would invalidate every comparison that follows.

We therefore replace each explanation with a null version and rescore it. Two nulls are used, defined in \cref{sec:experiments}: the permuted null of Eq.~\ref{eq:nullperm} shuffles each curve's values across its grid, which leaves every effect magnitude intact and destroys only the mapping from feature value to effect; the randomised null of Eq.~\ref{eq:nullrand} replaces each curve with amplitude-matched noise. We also add noise to the intact curves in increasing amounts, in units of each curve's own amplitude, so that the degradation is graded rather than all-or-nothing. Every number in this subsection uses the parameter-free surrogate $g_{\mathbf 1}$ of Eq.~\ref{eq:unit}, scored by $R^2$ on the held-out split against the black box's predictions; the fitted variants are excluded here because rescaling the curves could partly repair a damaged explanation and blunt the collapse we are trying to expose.

\begin{table}[htbp]
\centering
\small
\caption{Test $R^2$ against the black box under the unit-coefficient construction $g_{\mathbf 1}$ of Eq.~\ref{eq:unit}, averaged over datasets and model families. \emph{Intact} is the unmodified explanation, \emph{Values permuted} the null of Eq.~\ref{eq:nullperm}, and \emph{Randomised} the null of Eq.~\ref{eq:nullrand}. \emph{Collapse} is the difference between the intact and permuted columns. Parenthesised ranges are 95\% intervals.}
\label{tab:controls}
\begin{tabular}{lrrrr}
\toprule
Method & Intact & Values permuted & Randomised & Collapse \\
\midrule
PDP & 0.702 \,(0.695, 0.709) & -0.654 \,(-0.678, -0.630) & -0.617 \,(-0.636, -0.597) & 1.357 \\
ALE & 0.713 \,(0.706, 0.720) & -0.741 \,(-0.756, -0.726) & -0.693 \,(-0.720, -0.667) & 1.454 \\
SHAP & 0.798 \,(0.794, 0.802) & -0.416 \,(-0.434, -0.399) & -0.380 \,(-0.396, -0.365) & 1.215 \\
LIME & 0.540 \,(0.535, 0.544) & -0.218 \,(-0.232, -0.204) & -0.196 \,(-0.207, -0.184) & 0.758 \\
\bottomrule
\end{tabular}
\end{table}

\begin{figure}[htbp]
\centering
\includegraphics[width=0.62\linewidth]{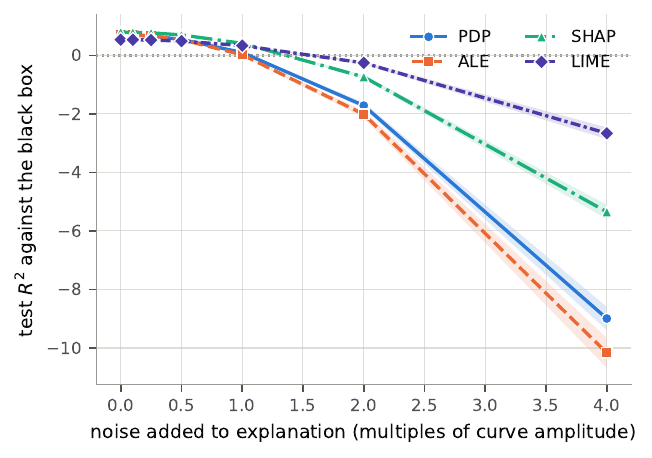}
\caption{Test $R^2$ against the black box as Gaussian noise of increasing
standard deviation is added to each explanation's curves, in units of that curve's own amplitude. Bands are 95\% intervals over repeated splits.}
\label{fig:corruption}
\end{figure}

\Cref{tab:controls} shows the intact and null scores for each method, and \cref{fig:corruption} shows the graded version. The collapse is complete: PDP falls from $\numPdpIntact$ to $\numPdpPerm$ and SHAP from $\numShapIntact$ to $\numShapPerm$. Negative values are worse than predicting the model's average output, so a null explanation is not merely uninformative but actively misleading about the model. \Cref{fig:corruption} shows the same thing without a cliff edge: fidelity falls monotonically with added noise for all four methods, crossing zero between one and two curve amplitudes.

The metric therefore passes the test it had to pass. Nothing in the construction lets an explanation score well without carrying information about how feature values map to model output, which is the property every comparison in the rest of this section relies on.

Two further readings of \cref{tab:controls} are worth taking while it is in front of us. The \emph{Intact} column already orders the methods SHAP (best explanations) $>$ ALE $>$ PDP $>$ LIME (poorest explanations), which \cref{sec:results-main} confirms with paired tests. And the \emph{size} of the collapse differs sharply: LIME falls by $0.758$ where ALE falls by $1.454$. That is a fact about scale rather than about information. LIME's local models are ridge-regularised, so its coefficients are attenuated and its curves have small amplitude; the nulls preserve amplitude by construction, and summing near-flat wrong curves approximates predicting the mean, which scores near zero rather than far below it. The collapse is thus confounded with how boldly a method states its case, which is why we report the intact value and the collapse separately rather than combining them.

\subsection{Testing the refitting-gain prediction}\label{sec:results-theory}

\Cref{cor:refit} makes a prediction that can be checked: when features are independent, fitting coefficients to the curves cannot improve on leaving them at unity, and \cref{cor:diagnostic} adds that any gain which does appear is a signal of feature dependence. It is important to test this, for two reasons. The prediction is what licenses the parameter-free construction used throughout the paper, so if refitting turned out to help substantially under independence the construction would be poorly chosen. And if the gain does track dependence, it delivers the practical diagnostic argued for in \cref{sec:theory}: a check on the independence condition that costs one extra least-squares fit and never requires estimating the feature distribution.

Testing it requires knowing the true feature correlation, which real data cannot supply. We therefore use the synthetic suite, where interaction strength $\tau$ and feature correlation $\rho$ are set independently by construction. Each design is drawn \numNDraws{} times from independent seeds and the draws are averaged, so that a difference between designs reflects the design rather than the particular sample each happened to receive. We report the relative gain $\Delta^{\mathrm{rel}}_{\mathrm{OLS}}$ of Eq.~\ref{eq:gain}, because the designs differ in overall fidelity and the same absolute gap means different things on different rows.

\begin{table}[htbp]
\centering
\small
\caption{Relative refitting gain for PDP, $(R^2(g_{\hat\alpha}) - R^2(g_{\mathbf 1}))/R^2(g_{\mathbf 1})$ of Eq.~\ref{eq:gain}, in per cent. Rows are the synthetic designs, those with independent features ($\rho = 0$) above the rule and those with correlated features below; $\rho$ is the designed feature correlation of Eq.~\ref{eq:dgpX}. Columns are the four model families. Each entry averages the independent draws of that design.}
\label{tab:refit-gain}
\begin{tabular}{llrrrr}
\toprule
Design & $\rho$ & Gradient Boosting & MLP & Random Forest & SVM RBF \\
\midrule
syn\_inter00 & 0.00 & +0.1 & +0.0 & -1.9 & -0.0 \\
syn\_inter10 & 0.00 & +0.2 & +0.1 & -1.5 & +0.1 \\
syn\_inter25 & 0.00 & +0.3 & +0.0 & -2.0 & +0.0 \\
syn\_inter50 & 0.00 & +0.2 & -0.3 & -2.2 & -0.4 \\
syn\_inter75 & 0.00 & -0.1 & +0.3 & -4.1 & -0.3 \\
\midrule
syn\_corr30 & 0.30 & +1.4 & +2.4 & +0.9 & +2.0 \\
syn\_corr60 & 0.60 & +4.8 & +18.0 & +4.1 & +7.2 \\
syn\_corr85 & 0.85 & +7.1 & +59.0 & +3.9 & +38.0 \\
syn\_regress & 0.20 & +1.8 & +1.4 & +1.1 & +1.2 \\
\bottomrule
\end{tabular}
\end{table}

\Cref{tab:refit-gain} shows the relative gain for PDP on each design. In the independent block it is within $\pm 0.4\%$ of zero for Gradient Boosting, the MLP and the SVM RBF at every interaction strength. In the correlated block it grows with $\rho$: roughly $1\%$--$2\%$ at $\rho = 0.30$, $4\%$--$18\%$ at $\rho = 0.60$, and $4\%$--$59\%$ at $\rho = 0.85$. Random Forest is offset downwards throughout the independent block, sitting at $-1.5\%$ to $-4.1\%$ where the other three sit at zero.

Both halves of the prediction hold. Refitting buys nothing under independence, whatever the interaction strength, which is what \cref{cor:refit} requires and what justifies using $g_{\mathbf 1}$ as the paper's metric. And the gain rises with feature correlation, confirming \cref{cor:diagnostic}'s claim that it carries information about dependence.

Two limits on its use follow from the same table. The random forest's negative offset under independence means the null is model-dependent, so a value must be calibrated per model family before it can be called positive, exactly the caveat attached to \cref{cor:diagnostic}. And the spread across families at high correlation is wide: at $\rho = 0.85$ the MLP gains $59\%$ and Random Forest $3.9\%$ from the same data. The gain detects dependence; it does not measure it on a scale comparable across models.

\paragraph{Why the tree ensembles gain so much less.}
The split at $\rho = 0.85$ is between the two tree ensembles, which gain $7.1\%$ and $3.9\%$, and the two smooth models, which gain $59\%$ and $38\%$. It is not an artefact of the relative scale: the absolute gains differ in the same direction and by a similar factor, $0.062$ and $0.035$ for Gradient Boosting and Random Forest against $0.272$ and $0.175$ for the MLP and the SVM RBF. What drives it is how much the parameter-free surrogate had already achieved. At $\rho = 0.85$ the unit surrogate reproduces the tree ensembles well ($R^2 = 0.885$ and $0.905$) and the smooth models poorly ($0.541$ and $0.623$), so there is far more for refitting to recover in the smooth case, and refitting duly recovers it, bringing all four to between $0.80$ and $0.95$.

We offer the following explanation, and flag that we have not tested it directly. A partial dependence function averages $\fhat$ over the \emph{marginal} distribution of the remaining features, so at $\rho = 0.85$ it evaluates the model at feature combinations that the joint distribution almost never produces. How badly the resulting curve is distorted therefore depends on how the model behaves away from the data. A tree ensemble is piecewise constant on axis-aligned cells and its predictions off the data manifold are bounded by the leaf values it learned on it, so its partial dependence curves stay close to the right scale. An RBF support vector machine and a multilayer perceptron are smooth global functions that extrapolate, and off-manifold they can return systematically inflated or deflated values, which distorts the amplitude of their curves. A single coefficient per feature is exactly the correction such a distortion needs, which is why refitting buys them so much. Testing this would require comparing on-manifold and off-manifold partial dependence estimates directly, which we did not do.

For the parameter-free surrogate the practical reading is favourable. At the correlations typical of real tabular data the cost of fixing the coefficients at unity is $1\%$--$2\%$ of $R^2$, so $g_{\mathbf 1}$ remains a reliable instrument even though the independence condition that formally licenses it does not hold. Only at extreme correlation does that change, and there it changes sharply for the smoother models.

\subsection{Is it the explanation or the aggregator?}\label{sec:results-aggregator}

Comparing SHAP and LIME against PDP and ALE through a shared construction is only fair if the construction itself is not doing the work. There are two specific concerns. First, SHAP and LIME receive an estimation step that PDP and ALE do not: their per-observation attributions must be smoothed into curves by the binning of Eq.~\ref{eq:binned}, and that step is a conditional average. A conditional average can track $\E[\fhat \mid X_j]$ under feature dependence in a way a marginal partial dependence curve provably cannot, so it could hand SHAP and LIME an advantage that has nothing to do with their attributions. Second, SHAP's tree estimator has a marginal and a conditional variant, the choice between them is usually left at a library default, and the two estimate different quantities, so a SHAP result could be an artefact of that default.

We address the first concern with a control that removes the explanation entirely: the black box's own predictions are passed through the identical binning, giving curves $\hat c_j(x_j) \approx \widehat{\E}[\fhat \mid X_j = x_j]$ with no attribution anywhere in the pipeline. If the aggregator were doing the work, this control would match the explanation-derived predictors. For the second we compute both TreeSHAP variants separately on the two tree families where both are defined. All results here are on the real datasets and use the refitted construction $g_{\hat\alpha}$ of Eq.~\ref{eq:ols}; refitting is used because the control sums $p$ conditional means that each carry the whole marginal signal, and under unit coefficients it would over-count by roughly a factor of $p$, producing a scaling artefact rather than an informative comparison.

\begin{table}[htbp]
\centering
\small
\caption{Test $R^2$ against the black box under the refitted construction $g_{\hat\alpha}$ of Eq.~\ref{eq:ols}, averaged over the real datasets. Rows are curve sources, columns model families. \emph{SHAP, conditional TreeSHAP} replaces the marginal \texttt{interventional} estimator with \texttt{tree\_path\_dependent} and is defined only on the two tree families. \emph{E[f\,$\vert$\,$x_j$], binned} applies the binning of Eq.~\ref{eq:binned} to the black box's own predictions, with no explanation in the pipeline. Dashes mark undefined combinations.}
\label{tab:aggregator-control}
\begin{tabular}{lrrrr}
\toprule
Curve source & Gradient Boosting & MLP & Random Forest & SVM RBF \\
\midrule
PDP & 0.799 & 0.816 & 0.848 & 0.756 \\
ALE & 0.801 & 0.816 & 0.848 & 0.759 \\
SHAP & 0.814 & 0.837 & 0.857 & 0.784 \\
SHAP, conditional TreeSHAP & 0.813 & -- & 0.859 & -- \\
E[f | x\_j], binned (no explanation) & 0.769 & 0.788 & 0.824 & 0.745 \\
\bottomrule
\end{tabular}
\end{table}

\Cref{tab:aggregator-control} shows every curve source against every model family. The control is the last row. It is beaten by all three curve sources on all four model families, reaching $\numOlsCondMean$ on average against $\numOlsPDP$ for PDP, $\numOlsALE$ for ALE and $\numOlsSHAP$ for SHAP; each of the three beats it at the dataset level (PDP $p = \numPDPVsCondP$, ALE $p = \numALEVsCondP$, SHAP $p = \numSHAPVsCondP$). The two TreeSHAP variants sit on top of each other: tested over the two tree families the difference is $\numTreeShapDiffUnit$ under the unit construction ($p = \numTreeShapPUnit$) and $\numTreeShapDiffOls$ under refitting ($p = \numTreeShapPOls$).

Neither concern survives. The shared aggregator does not reproduce the explanations (applied on its own it loses to every one of them), so the comparison in the next subsection is between explanations and not between smoothers. And the estimator choice does not account for the SHAP result, since the marginal and conditional variants are indistinguishable on this benchmark. The advantage lies in the attributions.

\subsection{Comparing the reliability of the four explanation methods}\label{sec:results-main}

With the construction validated and the aggregator ruled out, the comparisons in the rest of the paper can be read directly. Two questions matter to a practitioner: which explanation method carries most of the model, and how much of the model any of them carries relative to using no explanation at all. The second matters because a ranking is only useful if the quantities being ranked are worth having.

We score all four methods on the real datasets under the three constructions of Eqs.~\ref{eq:unit}--\ref{eq:ridge}, test the six pairwise differences with the dataset as the unit of replication and Holm correction over the family, and place the results against four reference points scored on the original outcome: an intercept-only floor, a linear model on the raw features, an additive spline model fitted directly to $y$, and the average of the four black-box models (Gradient Boosting, MLP, Random Forest and SVM RBF).

\begin{table}[htbp]
\centering
\small
\caption{Test $R^2$ of each explanation-derived predictor against the black box's own predictions. Rows are the constructions of Eqs.~\ref{eq:unit}--\ref{eq:ridge} --- $g_{\mathbf 1}$, $g_{\hat\alpha}$ and $g_{\hat\alpha_\lambda}$ --- plus LIME's native local-IDW variant; columns are the explanation methods. Entries are test $R^2$, averaged over datasets and over the four model families, with the half-width of a 95\% $t$ interval. A dash marks a construction a method does not have.}
\label{tab:fidelity-compact}
\begin{tabular}{lrrrr}
\toprule
Construction & PDP & ALE & SHAP & LIME \\
\midrule
additive, unit & 0.702 $\pm$ 0.007 & 0.713 $\pm$ 0.007 & 0.798 $\pm$ 0.004 & 0.540 $\pm$ 0.005 \\
additive, ols & 0.805 $\pm$ 0.005 & 0.806 $\pm$ 0.005 & 0.823 $\pm$ 0.005 & 0.762 $\pm$ 0.005 \\
additive, ridge & 0.806 $\pm$ 0.005 & 0.806 $\pm$ 0.005 & 0.824 $\pm$ 0.005 & 0.762 $\pm$ 0.005 \\
local-IDW & -- & -- & -- & 0.463 $\pm$ 0.034 \\
\bottomrule
\end{tabular}
\end{table}
\begin{table}[htbp]
\centering
\small
\caption{Paired differences in test $R^2$ against the black box between explanation methods, under the unit-coefficient construction $g_{\mathbf 1}$ of Eq.~\ref{eq:unit}. The entry in row $a$, column $b$ is $\Delta R^2 = R^2(a) - R^2(b)$, followed in parentheses by the Holm-corrected $p$-value of a paired $t$-test over datasets. A positive entry favours the row method. The unit of replication is the dataset ($n = 13$): differences are averaged to one value per dataset before testing. Holm correction is over the 6 comparisons. The lower triangle is left empty because it carries the same numbers with the sign reversed.}
\label{tab:pairwise}
\begin{tabular}{lrrrr}
\toprule
 & PDP & ALE & SHAP & LIME \\
\midrule
PDP & -- & -0.011 \, (0.781) & -0.096 \, (0.019) & +0.163 \, (0.019) \\
ALE & -- & -- & -0.085 \, (0.019) & +0.173 \, (1e-05) \\
SHAP & -- & -- & -- & +0.258 \, (4e-05) \\
LIME & -- & -- & -- & -- \\
\bottomrule
\end{tabular}
\end{table}

\Cref{tab:fidelity-compact} shows the test $R^2$ of each method and construction, and \cref{tab:pairwise} shows the paired differences. SHAP is first and LIME last: SHAP leads PDP by $\numShapPdpDiff$ and ALE by $\numShapAleDiff$, and every method leads LIME. PDP and ALE are not separated, differing by only $\numAlePdpDiff$.

That last result is worth stating positively rather than as a failure to reject. ALE was designed to behave better than PDP under feature dependence, and these are dependent real datasets, yet on this criterion and at this sample size the two cannot be told apart.

One explanation is that the two properties pull in opposite directions and roughly cancel. ALE's advantage over PDP is confined to dependence: it accumulates local effects computed within narrow intervals of the feature, so it never evaluates the model at the off-manifold combinations that distort a partial dependence curve. But that same construction is awkward when the model contains interactions, since the local effect of $x_j$ then depends on where the other features happen to sit within each interval, and accumulating those effects into a single curve has no clean interpretation, a limitation Apley and Zhu~\cite{apley2020visualizing} are explicit about. Real tabular data supplies both dependence and interaction at once, so ALE gains on one axis and loses on the other. That would produce exactly what we observe: a difference close to zero, rather than the ALE advantage the dependence argument alone would predict. We put this forward as a reading consistent with the data rather than as something these experiments isolate, since separating the two effects would need designs that vary dependence and interaction independently and then measure ALE against PDP on each; our synthetic suite varies them independently but was not analysed for this contrast. \Cref{tab:fidelity-compact} also shows why LIME's position depends on which construction is read: its unit and refitted scores differ by more than $0.2$, where SHAP's differ by $0.025$. LIME's information is present but mis-scaled, because its ridge-regularised local coefficients are attenuated; reporting either construction alone would misrepresent it in one direction or the other.

\begin{table}[htbp]
\centering
\small
\caption{Predictors scored against the original target $y$ (test ROC-AUC), averaged over datasets and over the four model families, with a 95\% interval. Rows are ordered best to worst. Explanation rows use the unit-coefficient construction $g_{\mathbf 1}$ of Eq.~\ref{eq:unit}; \emph{Additive spline model} and \emph{Linear on raw features} are fitted directly to $y$ with no explanation involved.}
\label{tab:baselines}
\begin{tabular}{lr}
\toprule
Predictor & Score \\
\midrule
Black box & 0.917 \,(0.913, 0.921) \\
Additive spline model (fitted to $y$) & 0.900 \,(0.896, 0.904) \\
SHAP (additive, unit) & 0.895 \,(0.890, 0.899) \\
PDP (additive, unit) & 0.890 \,(0.885, 0.894) \\
Linear on raw features & 0.883 \,(0.880, 0.887) \\
ALE (additive, unit) & 0.870 \,(0.866, 0.875) \\
LIME (additive, unit) & 0.845 \,(0.841, 0.849) \\
Intercept only & 0.500 \,(0.500, 0.500) \\
\bottomrule
\end{tabular}
\end{table}

\Cref{tab:baselines} places these numbers on the outcome axis, and three things follow from it. First, the ordering of the explanation methods is the same one \cref{sec:results-validity} found against the black box: SHAP is best and LIME worst, with PDP and ALE between them. That the ranking survives a change of target (from reproducing the model to recovering the outcome) is worth noting, since the two are different questions and need not have agreed.

Second, the explanation-derived predictors come close to the black box itself. SHAP's unit surrogate reaches $\numYSHAP$ against $\numYBlackBox$ for the average of the four black-box models, so a predictor built from nothing but an explanation recovers most of what the model achieves. Note where the linear baseline falls: at $\numYLinear$ it is not below all four explanations but among them, above ALE and LIME and below SHAP and PDP. The floor against which the explanations should be read is therefore the intercept-only row at $\numYIntercept$; the linear model is better understood as a fifth competitor than as a baseline.

Third, and most informative, the explanations are essentially level with the additive spline model at $\numYSpline$, a model fitted directly to $y$ with no explanation involved, and the best any additive function of the features can do here. The explanation curves therefore capture very nearly all of the marginal information available. This is what \cref{prop:projection} leads one to expect: the curves span the additive space, so a predictor built from them should reach the additive optimum and stop there. The remaining gap to the black box is the interaction structure, which no additive explanation can convey.

The practical conclusion is therefore narrower than the ranking alone suggests. The ordering is real and reliably measured, but on the axis a practitioner cares about (recovering the outcome), the four methods differ little, and none of them approaches the black box. The value of the metric is in auditing how much of a model an explanation conveys, not in choosing an explanation method to deploy as a predictor.

\paragraph{The ranking is a dependence phenomenon.}
The ordering above is not a general property of the methods, and the synthetic suite shows why. Where features are independent by construction, \cref{prop:projection} says the partial dependence surrogate is already the optimal additive approximation, so nothing should beat it. That is what happens: on the \numNIndep{} independent-feature designs SHAP is \emph{worse} than PDP ($\numShapPdpIndep$, winning on $\numShapPdpIndepWins$ designs, $p = \numShapPdpIndepP$) and indistinguishable from ALE. An attribution-based estimate of an object PDP already computes exactly can only add noise. On the \numNReal{} real datasets, where features are dependent and the projection property lapses, SHAP leads PDP by $\numShapPdpReal$ ($\numShapPdpRealWins$, $p = \numShapPdpRealP$). Both figures come from the same per-cell results that \cref{tab:fidelity-compact,tab:pairwise} summarise, split by whether the dataset's features are independent by construction rather than pooled; the real-data figure is the split that \cref{tab:pairwise} reports, and the independent-feature figure uses the synthetic designs that the rest of this section sets aside. So the ranking is what happens once the condition that makes PDP optimal is violated, which on real tabular data it almost always is. 

\subsection{Relation to established metrics}\label{sec:refmetrics}

A new evaluation metric has to answer what it adds. If it merely reproduces the ranking of an existing metric then it is a reformulation, however differently motivated, and a practitioner already computing that metric gains nothing by adopting it.

We therefore compute five established explanation metrics (infidelity~\cite{yeh2019infidelity}, faithfulness correlation~\cite{bhatt2020evaluating}, max-sensitivity~\cite{yeh2019infidelity}, complexity~\cite{bhatt2020evaluating} and sparseness~\cite{chalasani2020concise})
on the same explanations, the same models and the same splits as our own, and correlate the rankings. Each has free parameters, and a rank correlation is only meaningful if they are stated: infidelity uses $50$ Gaussian perturbations at scale $0.2$ standard deviations; faithfulness correlation uses $50$ random subsets covering $30\%$ of the features, replaced by the training mean; max-sensitivity uses $20$ perturbations at radius $0.1$ standard deviations; complexity and sparseness have no parameters beyond the attribution matrix. All five are evaluated on the same $200$ held-out points. We took these values from the defaults in the originating papers and did not tune them.

\begin{table}[htbp]
\centering
\small
\caption{Spearman rank correlations between the proposed metric (test $R^2$ against the black box, unit-coefficient construction) and five established metrics computed on the same explanations and splits, one row per established metric. \emph{Pooled $\rho$} ranks over all (dataset, model, method, split) cells at once, with its $p$-value alongside. \emph{Within-cell $\rho$} ranks the four explanation methods against each other inside a single (dataset, model, split) cell and averages over the 1040 such cells, with a 95\% interval.}
\label{tab:refmetrics}
\begin{tabular}{lrrr}
\toprule
Established metric & Pooled $\rho$ & $p$ & Within-cell $\rho$ \\
\midrule
infidelity & 0.120 & 7.9e-15 & 0.433 \,(0.403, 0.463) \\
faithfulness corr & 0.628 & 0.0e+00 & 0.452 \,(0.422, 0.481) \\
max sensitivity & -0.237 & 2.5e-54 & 0.035 \,(-0.001, 0.071) \\
complexity & -0.090 & 6.4e-09 & 0.003 \,(-0.033, 0.038) \\
sparseness & 0.190 & 3.2e-35 & 0.012 \,(-0.024, 0.047) \\
\bottomrule
\end{tabular}
\end{table}

\Cref{tab:refmetrics} shows the correlations two ways. Pooled across cells, the metric is close to orthogonal to infidelity ($\rho = \numRhoInfid$), complexity ($\numRhoCplx$) and sparseness ($\numRhoSpars$), and correlated with faithfulness correlation ($\numRhoFaith$) and weakly with max-sensitivity ($\numRhoSens$). Within a (dataset, model, split) cell, the comparison in which a choice between explanation methods is actually taken, the picture changes: faithfulness correlation $\numRhoWithinFaith$, but also infidelity $\numRhoWithinInfid$. Sparseness remains indistinguishable from zero, and so does complexity ($\numRhoWithinCplx$, with an interval of $(\numRhoWithinCplxLo, \numRhoWithinCplxHi)$ that includes zero).

The honest summary is that the metric belongs to the \emph{faithfulness family} rather than restating any single member of it. It is best understood as a global, out-of-sample, construction-based relative of faithfulness correlation, and its value does not rest on being orthogonal to it. What it adds is the decomposition: a fidelity number paired with $\Radd$ separates "this explanation is poor'' from "this model is not additive, so no additive explanation could have done better'', a distinction no single faithfulness score makes.

\paragraph{Metric versus mechanism.}
A low correlation could arise from either of two sources and this design does not separate them. One is the \emph{scoring function}: ours is an $R^2$ against the black box, theirs are perturbation-based quantities on other scales. The other is the \emph{evaluation mechanism}: ours is the only one that builds a predictor from the explanation and tests it out of sample, whereas the other five interrogate the attribution matrix in place. We hold the reference metrics at the mechanism their authors specify and do not compute, say, an infidelity of our surrogate, so the comparison is between the metrics as they are used rather than a factorial decomposition. Our reading is that the mechanism is the larger part of the difference (an out-of-sample predictive test can fail in ways an in-place perturbation test cannot see, which is what the next subsection exploits), but the present experiment does not isolate it.

\subsection{Is the proposed metric better?}\label{sec:results-discrimination}

The previous subsection established that the metric is not a restatement of an existing one. That is a claim about difference, and it leaves the harder question open: is it \emph{better}? This is the question a practitioner choosing a metric actually faces, and it is rarely asked of explanation metrics, which are usually justified by the reasonableness of their definition rather than tested against a known answer.

It can be asked, because any evaluation metric asserts an ordering: given two explanations of the same model, it says which is better. That assertion is testable whenever we can manufacture pairs whose true ordering is known. We build such pairs by degrading an explanation by a controlled amount, giving a ladder of progressively worse versions,
\begin{equation}\label{eq:ladder}
  E \;\succ\; E_{\sigma = 0.5} \;\succ\; E_{\sigma = 1} \;\succ\;
  E_{\sigma = 2} \;\succ\; E_{\mathrm{perm}},
\end{equation}
where $E_\sigma$ adds Gaussian noise of standard deviation $\sigma$ times each curve's own amplitude (Eq.~\ref{eq:corrupt}) and $E_{\mathrm{perm}}$ is the permuted null of Eq.~\ref{eq:nullperm}. The ordering is fixed by construction: each rung is the previous explanation plus strictly more noise. For every (dataset, model, split, method) on the real data we compute all six evaluation metrics at every rung and record whether each ranked the intact explanation above the degraded one, reading each metric in the direction its own authors specify. Aggregating with the dataset as the unit gives a probability of correct ranking, for which $0.5$ is chance.

Difficulty comes from degrading the explanation rather than from noising the data. Noising the data would change what the true explanation is, so the ground-truth ordering the comparison rests on would itself become an estimate, and a metric could be penalised for correctly tracking an ordering we had mis-specified. Degrading the explanation leaves the ground truth exact, and the intermediate rungs supply difficulty by sitting close to the intact explanation.

\begin{table}[htbp]
\centering
\small
\caption{One row per evaluation metric. Columns are the rungs of the degradation ladder of Eq.~\ref{eq:ladder}: $\sigma$ is Gaussian noise added to the explanation's curves in units of each curve's own amplitude, and \emph{permuted} is the null of Eq.~\ref{eq:nullperm}. Each entry is the fraction of (model, split, explanation method) cells in which the metric ranked the intact explanation above that rung, averaged over the 13 datasets; \emph{Mean} averages across rungs. Each metric is read in the direction its originating paper specifies, so that a larger entry always means better agreement with the known ordering. Rows are ordered by \emph{Mean}, best first.}
\label{tab:discrimination}
\begin{tabular}{lrrrrr}
\toprule
Evaluation metric & $\sigma = 0.5$ & $\sigma = 1$ & $\sigma = 2$ & permuted & Mean \\
\midrule
Faithfulness correlation & 0.940 & 0.987 & 0.994 & 0.998 & 0.980 \\
Proposed ($R^2$ vs.\ $\hat f$) & 0.932 & 0.962 & 0.981 & 0.995 & 0.967 \\
Max-sensitivity & 0.957 & 0.959 & 0.961 & 0.956 & 0.958 \\
Infidelity & 0.757 & 0.887 & 0.962 & 0.347 & 0.738 \\
Sparseness & 0.368 & 0.361 & 0.364 & 0.291 & 0.346 \\
Complexity & 0.341 & 0.333 & 0.338 & 0.283 & 0.324 \\
\bottomrule
\end{tabular}
\end{table}

\Cref{tab:discrimination} shows the accuracy of each metric at each rung. Three patterns stand out, and they are of different kinds. Faithfulness correlation ranks the ladder correctly \numDiscFaithMean{} of the time against \numDiscProposedMean{} for ours and \numDiscSensMean{} for max-sensitivity. Taking the dataset as the unit, the difference between ours and faithfulness correlation is $\numDiscVsBestDiff$ ($p = \numDiscVsBestP$, ours ahead on $\numDiscVsBestWins$ of \numDiscNDatasets{} datasets): the two cannot be separated on this evidence.

Infidelity is defeated by the sharp null. It handles noise well, rising from $0.757$ at $\sigma = 0.5$ to $0.962$ at $\sigma = 2$, but against the permuted explanation it scores \numDiscInfidPerm{}, far below chance: it prefers the permuted version. Permutation preserves the multiset of curve values exactly and only reassigns them to different feature values, so attribution magnitudes survive intact, and infidelity's local perturbation test is largely a test of magnitude.

Complexity and sparseness are inverted. Both score below chance at every rung, averaging \numDiscCplxMean{} and \numDiscSparsMean{}, meaning they systematically prefer the degraded explanation. Neither is a faithfulness metric: both quantify how \emph{concentrated} an attribution vector is, and concentration is a property an explanation can have while being entirely wrong. Adding noise to a curve tends to inflate whichever feature the noise happened to favour, which concentrates the attributions and improves both scores.

Returning to the question the subsection opened with: our metric is not better than the best established alternative, and we do not claim it is. What it is, is a member of the group that works: it detects a degraded explanation as reliably as faithfulness correlation does, and the case for the framework rests on the decomposition of \cref{sec:refmetrics} rather than on this comparison. The more consequential finding concerns the others. Of the five established metrics tested, two track the ordering reliably, one fails on the single most important case, and two point the wrong way. Benchmark suites that report complexity and sparseness alongside faithfulness metrics invite the reading that a better score means a better explanation, and on this evidence that reading is reversed.

\paragraph{What this experiment can and cannot settle.}
The ladder degrades \emph{curves}, and three of the six metrics are computed from attributions derived from those curves, so there is a structural affinity between the test and part of what is being tested; a ladder built by corrupting attributions before they are smoothed might order the metrics differently. Where the experiment is unambiguous is in the negative direction: a metric that ranks a permuted explanation above an intact one is not measuring faithfulness, whatever else it may be measuring, and no affinity argument rescues it.

\section{Discussion}\label{sec:limitations}

\paragraph{Whether the black box was warranted at all.}
Rudin~\cite{rudin2019stop} argues that for high-stakes decisions one should use an inherently interpretable model rather than explain an opaque one. Our results speak to that claim without settling it, and they lean the other way.

Two facts from \cref{tab:baselines} matter. The black box is the most accurate predictor in the study, reaching AUC $\numYBlackBox$ against $\numYSpline$ for the spline GAM fitted directly to $y$, so the interpretable model does carry a real accuracy cost here. And the explanation-derived predictors come within a hair of the GAM (SHAP's reaches $\numYSHAP$), so the explanations convey very nearly as much of the task as the interpretable model does in total. On this evidence the combination of an accurate black box with model-agnostic explanations performs well: one keeps the accuracy and gives up little of what the interpretable model would have shown.

The obvious rejoinder is that an explanation is only worth what it faithfully reports, and a predictor built from an unfaithful explanation could score well for the wrong reasons. This is where the theory does some work. \Cref{prop:projection} establishes that under feature independence the partial dependence surrogate \emph{is} the optimal additive summary of the model, so there is a regime in which these explanations are provably as good as any additive description could be. Outside it, \cref{cor:diagnostic} identifies the sole source of degradation as feature dependence, and \cref{sec:results-theory} shows the resulting cost is $1\%$--$2\%$ of $R^2$ at the correlations typical of tabular data. So the reliability of the explanations is not merely assumed: it is bounded by a stated condition and the departure from it is measurable.

None of this settles the debate, which turns on considerations (auditability, recourse, contestability) that a fidelity number does not reach. What our framework contributes is a way to make one side of it quantitative: the cost of using a black box plus explanations, rather than an interpretable model, can now be read off as the gap between the black box and the explanation-derived predictor, and on these datasets that gap is small.

\paragraph{What the metric does not capture.}
An explanation can be informative without being comprehensible, and our construction is blind to the difference. A method that scattered predictive content across forty features would score well here and be useless to a practitioner. The metric belongs alongside complexity and robustness criteria, not in place of them; its weak association with the complexity family (\cref{sec:refmetrics}) suggests it measures a largely different axis, not that the other axis is unimportant.

\paragraph{The additive form is a choice, and so is the one-dimensional curve.}
Both curve-based methods and our SHAP/LIME conversions produce additive surrogates, so the framework cannot reward an explanation for conveying interaction structure. This is a deliberate restriction (a more flexible surrogate would measure the surrogate's capacity rather than the explanation's content), but it bounds what the comparison can say.

The restriction has two layers, and they are worth separating. The first is the \emph{summation}: we add the per-feature curves rather than combining them in some richer way. The second is the \emph{input}: we take one-dimensional curves, although PDP and ALE are defined for feature subsets of any size and two-dimensional versions are routinely used to inspect pairwise interactions. Admitting second-order curves would raise the ceiling, since a surrogate $a_0 + \sum_j a_j(x_j) + \sum_{j<k} a_{jk}(x_j,x_k)$ can reach the order-two Hoeffding components that $\mathcal{A}$ of Eq.~\ref{eq:addspace} cannot, 
and \cref{prop:projection} generalises to that space unchanged. What does not generalise cheaply is the parameter-free property: the number of second-order terms grows as $p^2$, most of them are near zero, and estimating which to keep reintroduces exactly the fitting step whose absence makes the present metric attributable. Extending the framework in this direction while keeping it parameter-free is open, and is the most obvious next step.

\paragraph{Output scale.}
Classification is explained on the probability scale. Log-odds is arguably more principled, since additivity is a property of the score, but it is not safe by default: a tree ensemble grows pure leaves and emits probabilities of exactly $0$ and $1$, so the required clip becomes the dominant determinant of the score's variance. With 60\% of predictions saturated on one dataset we measured $\Radd = -2.11$, an artefact entirely of the clip. Setting the clip from the ensemble's own resolution ($1/2T$ for $T$ trees) removes most but not all of the damage. Practitioners running this framework on log-odds should report the saturated fraction.

\paragraph{Library behaviour.}
Two implementation details silently invert results and are worth recording. In regression mode the reference LIME implementation stores its fitted coefficients under one label key and a \emph{sign-negated} display copy under another. Reading the first key (the obvious choice) inverts every local model, which turns a positive fidelity into a negative one: the surrogate then predicts the black box's deviations backwards, so it scores worse than predicting the mean. We caught this during development and it is now guarded by a unit test that fits LIME to a black box known to be monotone increasing in one feature and asserts the recovered coefficient is positive.
Separately, TreeSHAP's \texttt{raw} output for a scikit-learn tree classifier is the probability, not the log-odds, so on the log-odds scale its attributions violate local accuracy and every construction built on that decomposition becomes invalid. Both are guarded by assertions in our released code.

\paragraph{What excluding \texttt{magic\_telescope} costs.}
We removed one real dataset from the study because a single (dataset, model) cell, the RBF support vector machine, produced a partial dependence reconstruction far worse than predicting the model's own mean, and that cell dominated every pooled average it entered. The decision is defensible but it is not free, and it cuts against the paper's own argument in one respect: the regime in which partial dependence fails catastrophically is exactly the regime \cref{prop:projection} says to watch for, so removing the clearest real instance of it removes evidence for our own thesis as well as noise. What remains is the controlled correlation sweep, where the same failure is produced deliberately and can be read against a known $\rho$. A study aimed specifically at characterising catastrophic PDP failure should keep such datasets and report them per model rather than pooled; ours is aimed at comparing explanation methods on average, and for that purpose one dominating cell is a liability. We flag the exclusion here so that it is not mistaken for a data-cleaning step.

\paragraph{Reproducibility.}
All code, dataset specifications, and the full result table are released. Every table and figure is generated from the recorded results and read at compile time. So is every number quoted in the prose: each is emitted by the same pipeline as a macro and referenced by name, so the text cannot disagree with the tables beside it. We adopted this after finding that it had already happened: when one run superseded another, several sentences kept the earlier run's values while the adjacent tables were regenerated. Hand-checking is what failed, so hand-checking is not the remedy.

Explanations that claim to describe how a model works can be asked to predict. Turning that observation into a metric requires care: the construction must be shared across explanation families to be fair, parameter-free to be attributable, and interpreted against a reference that says how much additive structure was available to recover in the first place. We have made explicit the identification of that reference with the $L^2$ additive projection under independence, and shown that the condition is not a technicality: it is what decides the empirical ranking.

Across \numNReal{} real datasets and \numNSynthDesign{} synthetic designs, SHAP-derived curves carry the most recoverable predictive content and LIME the least, with ALE and PDP not reliably separated. But the ordering is entirely a dependence phenomenon. Where features are independent by construction, PDP is provably optimal and is measurably not beaten; where they are dependent (which on real tabular data they invariably are), SHAP gains, and two controls establish that the gain is in the attributions rather than in the aggregation used to read them. The useful statement for a practitioner is therefore conditional: how much of a model an explanation can convey depends less on which explanation method is chosen than on how far the model's inputs depart from independence.

Applying the same logic to the evaluation metrics themselves turns out to be the more uncomfortable exercise. A metric that claims to say which of two explanations is better can be handed two explanations whose ordering is known by construction, and asked. Of the five established metrics we tested this way, two track the ordering reliably, one is defeated by the simplest possible worthless explanation (a permuted curve, which it prefers to the intact one) and two are reversed, systematically scoring degraded explanations above faithful ones. Our own metric lands in the first group, not statistically separable from the best performer ($p = \numDiscVsBestP$). 
That is a modest claim for the metric and a considerably less modest one about the state of explanation evaluation: face validity is not evidence that a metric orders explanations correctly, and the ordering can be tested directly by anyone willing to build a ladder of known quality and climb it.

\section*{Acknowledgements}
This work began as the first author's bachelor thesis at Oslo Metropolitan University.

\section*{Author contributions}
H.L.H. conceived the original idea, carried out the experiments, supervised the project, and wrote and reviewed the manuscript. J.S. carried out the experiments and wrote and reviewed the manuscript. Both authors read and approved the final manuscript.

\section*{Competing interests}
The authors declare no competing interests.

\section*{Funding}
The authors received no specific funding for this work.

\section*{Data availability}
The real datasets analysed in this study are publicly available from OpenML~\cite{vanschoren2014openml}; the dataset identifiers are listed in \cref{tab:datasets}. The synthetic designs are generated by the released code from the specifications in \cref{sec:experiments}.

\section*{Code availability}
All code needed to reproduce the experiments, tables and figures is publicly available at \url{https://github.com/hugohammer/xaieval}.

\bibliographystyle{unsrt}
\bibliography{references}

\end{document}